
\typeout{IJCAI-20}
\documentclass{article}
\pdfpagewidth=8.5in
\pdfpageheight=11in
\usepackage{ijcai20}
\usepackage{macros_pan}

%

\newcommand{\cupdot}{\mathbin{\mathaccent\cdot\cup}}

\newcommand{\KK}{\mathcal{K}}
\newcommand{\cK}{\mathcal{K}}
\newcommand{\alg}{\ensuremath{\mathsf{SAMP}}\xspace}

\newcommand{\rd}{\ensuremath{\operatorname{RD}}\xspace}

\newcommand{\ts}{\ensuremath{\operatorname{TS-RA}}\xspace}
\let\oldnl\nl
\newcommand{\nonl}{\renewcommand{\nl}{\let\nl\oldnl}}%


%

\usepackage{times}  
\usepackage{soul}
\usepackage{url}  
\usepackage[hidelinks]{hyperref}
\hypersetup{colorlinks, citecolor=red, filecolor=blue, linkcolor=blue, urlcolor=blue}
\usepackage{cleveref}
\usepackage[utf8]{inputenc}
\usepackage[small]{caption}
\usepackage{graphicx}  
\usepackage{subfigure}
\usepackage{amsmath}
\usepackage{booktabs}
\urlstyle{same}

\title{A Unified Model for the Two-stage Offline-then-Online Resource Allocation\thanks{Copyright \copyright~2020 International Joint Conferences on Artificial
Intelligence (IJCAI). All rights reserved.}}
\author{
Yifan Xu$^{1,2,4}$
\and
Pan Xu$^3$\and
Jianping Pan$^4$\And
Jun Tao$^{1,2}$
\affiliations
$^1$Key Lab of CNII, MOE, Southeast University, Nanjing, China\\
$^2$School of Cyber Science and Engineering, Southeast University, Nanjing, China\\
$^3$Department of Computer Science, New Jersey Institute of Technology, Newark, USA\\
$^4$Department of Computer Science, University of Victoria, Victoria, Canada\\
\emails
xyf@seu.edu.cn,
pxu@njit.edu,
pan@uvic.ca,
juntao@seu.edu.cn
}

\begin{document}

\maketitle

\begin{abstract}
With the popularity of the Internet, traditional offline resource allocation has evolved into a new form, called \emph{online resource allocation}. It features the online arrivals of agents in the system and the real-time decision-making requirement upon the arrival of each online agent. Both offline and online resource allocation have wide applications in various real-world matching markets ranging from ridesharing to crowdsourcing. There are some emerging applications such as \emph{rebalancing in bike sharing} and \emph{trip-vehicle dispatching in ridesharing}, which involve a two-stage resource allocation process. The process consists of an offline phase and another sequential online phase, and both phases compete for the same set of resources.
In this paper, we propose a unified model which incorporates both offline and online resource allocation into a single framework. 
Our model assumes \emph{non-uniform and known} arrival distributions for online agents in the second online phase, which can be learned from historical data.
We propose a parameterized linear programming (LP)-based algorithm, which is shown to be at most a constant factor of $1/4$ from the optimal.
Experimental results on the real dataset show that our LP-based approaches outperform the LP-agnostic heuristics in terms of robustness and effectiveness.
\end{abstract}

\section{Introduction}\label{sec:intro}

Matching markets involve heterogeneous agents (typically from two parties) who are paired for mutual benefits.  During the last decade, matching markets have  emerged and grown rapidly through the medium of the Internet. They have evolved into a new style, 
called \emph{Online Matching Markets} (OMMs). Typical examples include ridesharing (riders and drivers), crowdsourcing (workers and tasks), and  Internet advertising (impressions and advertisers). In OMMs, agents of at least one type join the market in an online fashion, and are referred to as online agents (\eg riders, workers, and impressions). 
Furthermore, upon the arrival of any online agent, we have to decide quickly and irrevocably 
which offline agent(s) to match it with. That is mainly due to the low ``patience'' of the online agents. These features---online arrivals and the real-time decision-making requirement---distinguish OMMs from traditional matching markets where the information of all agents is fully disclosed in advance.

OMMs have received significant interest in both computer science and operations research communities. There is a large body of research work who studied matching policy design for the profit maximization in ridesharing~\cite{ashlagi2019edge,Patrick-18-JAI,BeiZ18,aaai-19-stable,DickersonAAAI18,Li2020TripVehicleAA}, crowdsourcing~\cite{assadi2015online,ho2012online,AAMAS18}, admission scheduling  and online recommendations~\cite{wang2018on-adv,ma2017online,chen2016dynamic}. Recently, \cite{dickerson2019online} presented a general  online resource allocation model, called Multi-Budgeted Online Assignment (MBOA), to address the matching policy design in various real-world OMMs featuring that each assignment could potentially consumes multiple resources. The basic model is as follows.  We are given a bipartite graph $G=(I,J, E)$ where $I$ and $J$ represent the respective sets of offline and online agents. There is a set $\cK$ of $K$ resources and each resource $k \in \cK$ has a given budget $B_k$. Each edge $e=(i,j)$ or assignment\footnote{Throughout this paper, we use the two terms ``edge'' and ``assignment'' interchangeably.} of $j$ to $i$ is associated with a profit and a vector cost of dimension $K$. A natural question is how to arrange an assignment for each online agent upon its arrival such that the total expected profit is maximized subject to the budget constraints. 

MBOA~\cite{dickerson2019online} has greatly generalized the assignment problems in OMMs. Consider Amazon Mechanical Turk for example, which automatically crowdsources online workers for offline tasks. In this case, there are typically two kinds of resources, budget (deposited by a task manager to pay workers) and tasks (typically each task has a limited number of copies). In addition to OMMs, MBOA  has captured a wide range of online resource allocation problems in datacenters~\cite{ghodsi2012multi,joe2013multiresource,ghodsi2013choosy}, 
public safety~\cite{shumate1966quantitative,Lee1979,recentPolice} and candidates recruitment~\cite{chenthamarakshan2012systems,yi2007matching}, for example.  Note that MBOA considers resource allocation only for online arrival agents.  
In some emerging applications, however, there are \emph{both  offline and online agents} who are competing for the same set of resources. Consider the two motivating examples below.

\xhdr{Rebalancing in Bike Sharing Systems (BSSs)}. BSSs have become an
important alternative for addressing the last mile problem in city Intelligent
Transportation Systems. Users varied travel patterns lead to uneven distributions of bikes among docking stations. A common scenario is during afternoon peak hours, ``demanding'' stations (\eg near shopping malls or metro stations) have a high demand but a low supply, while ``supplying'' stations (\eg suburb areas far away from downtown) have a high supply but a low demand. Thus, BSSs have to rebalance distributions by moving bikes from  ``supplying'' to ``demanding'' stations such that they can address as many user requests as possible. Two common rebalancing strategies are proposed. One is the Truck-Based Rebalancing (TBR), which directly utilizes trucks or trailers to move bikes from one station to the other. The other is Crowdsourcing-Based Rebalancing (CBR), which incentives online bike users to participate in rebalancing
tasks. TBR has a strong controllability, high efficiency during off-peak hours, and a high labor cost. In contrast, CBR has a relatively low controllability, high efficiency during peak hours, and a low labor cost. The two approaches each has received considerable interest, see, \eg~\cite{raviv2013static,o2015data,liu2016rebalancing,li2018dynamic} for TBR, 
and~\cite{singla2015incentivizing,haider2018inventory,pan2019drl,duan2019optimizing} for CBR. However, very few of them have ever considered both TBR and CBR in the same framework,
though the two are proposed to address the same issue. For TBR, it is typically deployed during off-peak hours and involves renting
\emph{offline} trucks and labors, especially when there are not many users. For CBR, it is operated during peak hours instead, and crowdsources \emph{online} users as potential workers, especially when trucks are restricted during peak hours. Note that both TBR and CBR compete for one resource, the global budget used for renting truck and hiring labors in TBR, and paying online workers in CBR. Additionally, the two compete for the same set of tasks, \ie moving bikes from supplying to demanding stations. In fact, we can view each supplying and demanding station as a ``resource''  with the capacity being the number of bikes in supply and demand. A natural question arises: How to optimally allocate the ``resources'' to the two approaches such that we can complete as many tasks as possible?

\xhdr{Trip-vehicle dispatching with multi-type requests}~\cite{huang2019optimal}. In recent years, ridesharing apps such as DiDi launch services that allow riders to reserve a trip in advance, which is called \emph{scheduled requests}. 
Consider a short time window $\cW$ during peak hours. Let $\cC_1$ be the set of scheduled requests received with starting time in $\cW$ and $\cC_2$ be the (random) set of real-time  requests which arrive in $\cW$. Note that $\cC_1$ is received well before the start of $\cW$ and thus can be viewed as \emph{offline} requests. Both $\cC_1$ and $\cC_2$ are competing for the same set of ``resources'': the available drivers in $\cW$. A natural question is: How to allocate drivers to requests 
such that the total expected profit obtained is maximized?

The above two examples both feature a two-stage resource allocation: the first stage involves a set of \emph{static} offline agents while the second stage involves a set of \emph{dynamic} online agents, and they are competing for the same set of resources.

\xhdr{Main contributions}. Our contributions are summarized as follows. First, we propose a unified framework, called \emph{Two-Stage Resource Allocation} (\ts), by generalizing the model in~\cite{dickerson2019online}. Overall, \ts incorporates the offline and online resource allocation into one single framework, which can potentially capture a wider range of applications and optimize the allocation jointly. Second, we propose a parameterized linear programming (LP)-based algorithm, which is shown to be at most a constant factor of $1/4$ from the optimal. One highlight is the performance of our LP-based approaches depends only on the \emph{sparsity}, \ie the maximum number of different resources requested by an assignment, regardless of the total number of resources involved which can be potentially large.
Third, we present several greedy-based heuristics and compare our LP-based algorithms against those LP-agnostic heuristics. Experimental results on the real dataset show our LP-based approaches are robust and effective in a wide range of settings and they can universally dominate all LP-agnostic heuristics.

\section{Preliminaries}
\label{sec:pre}

We first formally define the model considered in this paper and then describe the required background for the technical sections of this paper. As a notation, denote $[k] \doteq \{1,2,\ldots,k\}$ for any positive integer $k$.

\xhdr{Two-Stage Resource Allocation} (\ts). 
We have a bipartite graph $G$ of vertex sets $(I,H \cupdot J)$ where $I$ and $H$ represent the sets of \emph{types} of offline vertices, 
and $J$ disjointed from $H$ represents the set of \emph{types} of online vertices.
Let $E^1$ and $E^2$ be the respective sets of edges in the subgraphs $G^1$ of $(I, H)$ and $G^2$ of $(I, J)$. There are $K$ offline resources and each resource $k$ has a budget $B_k \in \mathbb{R}^{+}$.  Each edge or assignment $e$ yields a positive reward (weight) $w_e>0$ and incurs a vector-valued cost $\ba_e=(a_{e,k}) \in  [0,1]^K$, where $a_{e,k}$ denotes the consumption of resource $k$ by $e$. The assignment process consists of two \emph{sequential} stages, (offline) Phase I and (online) Phase II. We first select an arbitrary set of edges $E_1$ (can be a multiset of $E^1$) in Phase I and then in Phase II, we conduct an online assignment process on the subgraph $G^2$ with $I$ and $J$ be the respective sets of offline and online vertices and output a set of edges $E_2$ (can be a multiset of $E^2$). Our overall goal is to design an allocation policy to maximize the expected total weights of all edges in $E_1 \cup E_2$ while the total cost of all edges in $E_1 \cup E_2$ does not exceed the budget $\bfB=(B_k)$ (the budget constraint is enforced throughout Phase I and Phase II).

Note that the full graph $G$ of $(I,H \cupdot J)$ is known as part of the input\footnote{A future direction is to explore how to incorporate online learning into our framework to learn the input dynamically.}. The only stochastic part to the algorithm is the arrival sequence of online vertices in Phase II, which is specified as follows. We have a finite horizon $T$ (known in advance). For each time (or round) $t \in [T] \doteq\{1,2,\ldots, T\}$, a vertex of type\footnote{The two terms ``a vertex of type $j$'' and ``a vertex $j$'' will be used interchangeably when the context is clear.}$j \in J$ will be sampled (or $j$ arrives) from a known distribution $\{p_{jt}\}$ such that $\sum_{j \in J} p_{jt}=1$, and the sampling procedures are independent across the $T$ rounds. Upon the arrival of an online vertex $j$, an \emph{immediate and irrevocable} decision is required: either reject $j$, or select an edge $e \in E^2_j$, where $E^2_j \subseteq E^2$ is the set of incident edges to $j$ in $G^2$. Observe that there could be multiple arrivals for each given type of online vertices, we can possibly select multiple copies of a single edge $e \in E^2$ in Phase II.

\noindent \textbf{Remarks}. First, the algorithms presented in this paper are applicable directly to a more general setting where each assignment $e$ yields a \emph{random} reward $W_e$ (independent from others). What we need is just the expectation $w_e \doteq \E[W_e]$ for each $e$. Second, we cannot simply merge Phase I to part of Phase II. Recall that in Phase II, we have to make an irrevocable matching decision upon the arrival of each online vertex. Suppose we try to treat Phase I as part of Phase II by viewing each arrival $h \in H$ with probability $1$ during the first $|H|$ rounds. This reduction will essentially restrict the power of algorithm design in Phase I in the way that we should process all assignments in $E^1$ following the arrival order of vertices of $H$. In fact, we can process all assignments in $E^1$ simultaneously since the graph $G^1$ is fully known in advance.


\xhdr{Integral and non-integral resources.} 
For an integral resource $k$, we have that for any edge $e \in E^1 \cup E^2$, $a_{e,k} \in \{0,1\}$ while for a non-integral resource $k$, we have that for any $e \in E$, $a_{e,k} \in [0,1]$. For any integral resource $k$, WLOG we assume that $B_k\in \mathbb{Z}_{+}$. Let $\KK_1=\{1,2,\cdots,K_1\}$ and $\KK_2=\{K_1+1,\cdots, K_1+K_2\}$ denote the set of integral and non-integral resources, respectively. For any edge $e$, let $\cS_e$ be the set of resources requested by $e$, \ie $\cS_e=\{k \in \cK: a_{e,k}>0\}$. In this paper, we assume that $|\cS_e \cap \KK_1| \le \ell_1$ and $|\cS_e \cap \KK_2| \le \ell_2$, where $\ell_1$  and  $\ell_2$ are the integral and non-integral sparsities (\ie for types of resources), respectively. 
All notations are summarized in Table~\ref{table:notations}.

\begin{table}[h]
\footnotesize
\caption{Summary of notations}
\label{table:notations}
\begin{tabular}{ |l|l| }
  \hline
  $T$ & time horizon \\
  $E^1$ & set of edges in $G^1=(I,H)$ \\
  $E^2$ & set of edges in $G^2=(I,J)$ \\
  $E^2_j$ & set of edges incident to $j \in J$ in $G^2$ \\
  $B_k$ & budget for resource $k$ \\
   $B$ & minimum budget over all \emph{non-integral} resources\\
   $\cK_1$ ($\cK_2$) & set of integral (non-integral) resources \\
  $K_1$ ($K_2$) & total number of integral (non-integral) resources\\
   $K$ &  total number of resources ($K= K_1 + K_2$)\\
     $\cS_e$ & set of resources consumed by assignment $e$\\
  $\ell_1$ ($\ell_2$) & sparsity of integral  (non-integral) resources per edge\\
  $\ba_e$ & $K$-dimensional cost vector consumed by assignment $e$\\
  $a_{e,k}$ & amount of resource $k$ consumed by assignment $e$ \\
     $p_{jt}$ &  probability that a vertex of type $j$ arrives at $t$ in Phase II\\
  \hline
\end{tabular}
\end{table}

Let us briefly show how \emph{rebalancing in BSSs} can be cast under \ts (similarly for trip-vehicle dispatching). Suppose we have two sets, $\cC_1$ and $\cC_2$, which denote the respective sets of supplying and demanding bike stations. Let each $a \in  \cC_1$ have a supply of $c_a\in\mathbb{Z}^{+}$ and each $b\in \cC_2$ have a demand of $c_b \in \mathbb{Z}^{+}$. Define $I=\cC_1 \times \cC_2$, $H=\{h\}$ includes one dummy node, and $J$ is the set of all online worker types. Consider a given $i=(a,b)$ with $a \in \cC_1, b\in \cC_2$. Let the edge $e=(i,h)$ denote the assignment of the task of moving one bike from $a$ to $b$ via TBR and $e=(i,j)$ denote the assignment of the same task to a worker of type $j$ via CBR. In this way,  we have $|\cC_1|+|\cC_2|$ kinds of integral resources and each $a \in \cC_1$ and $b \in \cC_2$ have a budget of $c_a$ and $c_b$, respectively. We have one non-integral resource, which is the global budget used to pay rented trucks, labor fees, and online workers. Though we potentially have a huge number of resources ($|\cC_1|+|\cC_2|+1$), each assignment requests at most two integral resources (the supplying and demanding stations) and one non-integral resource (the global budget), \ie $\ell_1=2, \ell_2=1$. Later we will show that the performance of our allocation policy depends only on the total number of sparsity $\ell_1+\ell_2$, regardless of the total number of resources.

\xhdr{Extension of competitive ratio}. 
Competitive ratio (CR) is a commonly-used metric to evaluate the performance of online algorithms. In our case, we have two stages, offline Phase I and online Phase II. We can extend CR to our setting as follows. Recall that our overall goal is to maximize the total weights of all assignments made in the two phases.
Let $\ALG$ denote an assignment policy for Phases I and II. Consider an input $\cI$ of the problem. Let $\E[\ALG(\mathcal{I})]$ denote the expected profit obtained by $\ALG$ for this instance, where the expectation is over the randomness in the arrival sequence in Phase II and any internal randomness wired in the algorithm. Similarly, let $\E[\OPT(\mathcal{I})]$ denote the expected value of the optimal offline solution (\ie the expected value of optimal assignments for Phases I and II after observing the entire arrival sequence in Phase II). The competitive ratio is defined as $\inf_{\mathcal{I}} \E[\ALG(\mathcal{I})]/\E[\OPT(\mathcal{I})]$.
	
For any maximization problem such as the one studied here, we say \ALG achieves a ratio at least $\alpha \in (0,1)$ if for any input of the problem the expected profit obtained by \ALG is at least a fraction $\alp$ of the offline optimal solution. Typically computing the value of $\E[\OPT(\mathcal{I})]$ directly is hard. A common method to bypass this is to construct a linear program (called \emph{benchmark \LP}) whose optimal value is an upper bound on $\E[\OPT(\mathcal{I})]$ due to the integrality gap. Hence comparing $\E[\ALG(\mathcal{I})]$ to the optimal value of this \LP~gives a lower bound on the competitive ratio. We will now describe the benchmark \LP~used in this paper.

\xhdr{Benchmark LP}. Recall that $E^2_j \subseteq E^2$ is the set of incident edges to $j$ in $G^2$. Consider an offline optimal algorithm. For each $e \in E^1$, let $x_e$ be the expected number of copies that $e$ is selected during Phase I and for each $e \in E^2$ and $t \in [T]$, let $y_{e,t}$ be the probability that edge $e$ is selected at $t$. Consider the following benchmark \LP. 

\begin{alignat}{2}
\label{LP:offline-a}
\text{max}    &  \textstyle \sum_{e \in E^1 } w_{e} x_e+ \sum_{e \in E^2, t\in [T] } w_{e} y_{e,t}   \\
\text{s.t.}  &  \textstyle \sum_{e \in E^2_{j}} y_{e,t} \le  p_{jt},~~    \textstyle \quad\forall j \in J, t \in [T] \label{cons:arr_1}\\
                   & \textstyle  \sum_{e \in E^1 } x_e a_{e,k}+  \sum_{e \in E^2, t\in [T] } y_{e,t} a_{e,k} \le B_k,      \textstyle \quad\forall k \in [K] \label{cons:res_1}\\
                   &\textstyle 0 \le x_{e}, 0 \le y_{e,t} \le 1,~~                   \textstyle \quad\forall e \in E, t \in [T] \label{LP:offline-d}
\end{alignat}

This \LP~can be interpreted as follows. Constraint~\eqref{cons:arr_1} : for any given vertex of type $j$ and time $t$ during Phase II, the probability that we select an edge incident to $j$ is at most the probability that $j$ arrives at time $t$. Constraint~\eqref{cons:res_1} : for any 
resource $k$,  the expected consumption cannot be larger than its budget ($B_k$). The last constraint~\eqref{LP:offline-d} is due to the fact that all $\{y_{e,t}\}$ are probability values and hence should lie in the interval $[0, 1]$. Also $x_e$ should be non-negative since it denotes the number of copies that $e$ is selected. The above analysis suggests that any offline optimal algorithm, $\{x_e, y_{e,t}\}$ should be feasible for the above LP. Formally, we have Lemma~\ref{lem:bench-lp} which claims that the optimal solution of this \LP~ is an upper bound on the expected offline optimal value. 
\begin{lemma}\label{lem:bench-lp}
The optimal value to \LP~\eqref{LP:offline-a} is a valid upper bound for the offline optimal solution. 
\end{lemma}

\begin{proof}
Consider an offline optimal algorithm. For each $e \in E^1$, let $x_e$ be the expected number of copies that $e$ is selected during Phase I and for each $e \in E^2$ and $t \in [T]$, and let $y_{e,t}$ be the probability that edge $e$ is selected at $t$. Essentially, we need to show that the solution of $\{x_e, y_{e,t}\}$ is feasible to LP~\eqref{LP:offline-a}.

Consider Constraint~\eqref{cons:arr_1} first. For any given vertex of type $j$ and time $t$ during Phase II, the probability that we select an edge incident to $j$ should be at most the probability that $j$ arrives at time $t$. Thus Constraint~\eqref{cons:arr_1} is valid. Now consider Constraint~\eqref{cons:res_1}. For any (integral or non-integral) resource $k$,  the expected consumption cannot be larger than its budget ($B_k$). As for Constraint~\eqref{LP:offline-d}, that is due to the fact that all $\{y_{e,t}\}$ are probability values and hence should lie in the interval $[0, 1]$. Also $x_e$ should be non-negative since it denotes the number of copies that $e$ is selected.  The above analysis suggests that any offline optimal solution $\{x_e, y_{e,t}\}$ should be feasible for \LP~\eqref{LP:offline-a}. Thus, we claim that the optimal solution of \LP~\eqref{LP:offline-a} is an upper bound on the expected offline optimal value.
\end{proof}

\section{Main Algorithm and Results}
In this section, we describe our main  algorithm $\alg$, which is an LP-guided sampling policy. 
		
Let $\{x_e^*, y^*_{e,t} \}$ be an optimal solution to the LP~\eqref{LP:offline-a}. The main idea behind $\alg$ (described in Algorithm~\ref{alg:main}) is as follows.
For a given non-negative value $x \ge 0$, let $\rd(x)$ be such an integral random variable that $\rd(x)=\lceil x_e \rceil$ with probability $x-\lfloor x \rfloor$ and $\rd(x)=\lfloor x \rfloor$ otherwise. Essentially $\rd(x)$ takes values between $\lfloor x \rfloor$ and $\lceil x \rceil$ with $\E[\rd(x)]=x$. 
In the offline Phase I, we independently sample $\rd(\eta x_e^*)$ copies of edges for each $e \in E^1$, where $\eta \in (0,1]$ is a scaling factor to optimize later. Let $\cE$ be the random multiset of all edges sampled in Phase I. Notice that the total consumption of $\cE$ can potentially violate budget constraints for some resources. If so, then remove all copies of $e$ if any number of $e$ contributes to the violation\footnote{Theoretically, we can act in this way, though we can implement the removal of copies sequentially until no violation of budget.}. In the online Phase II, suppose a vertex (of type) $j$ arrives at time $t$. We say an edge $e \in E^2_j$ is \emph{safe} iff adding $e$ will not violate any budget constraint (\ie the remaining budget at $t$ is enough to cover all resources requested by $e$). Sample an edge $e$ from $E_j^2$ with probability $\alp y_{e,t}^*/p_{j,t}$ and select it iff $e$ is safe. Here $\alp \in (0,1]$ is scaling factor to optimize later.

\begin{algorithm}[ht]
\caption{$\alg(\eta, \alp)$} 
\label{alg:main}
\DontPrintSemicolon
\nonl \textbf{Offline Phase I:}\;
Independently sample $\rd(\eta x_e^*)$ copies of edges for each $e \in E^1$.\;
Remove all copies of $e$ if $e$ is not \emph{safe} (\ie $e$ requests some resource on which there is a budget overrun).\;
\nonl \textbf{Online Phase II:}\;
Assume a vertex (of type) $j$ arrives at time $t$.\;
Sample an edge $e$ from $E_j^2$ with probability $\alp y_{e,t}^*/p_{jt}$. \label{alg:step3}\;
If $e$ is safe (\ie adding $e$ will not break any budget constraint at $t$), then select it; otherwise reject it.\;
\end{algorithm}


Note that Step~\ref{alg:step3} of \alg is well defined since we have $\sum_{e \in E^2_j} \alpha y^*_{e,t}/p_{jt} \le \sum_{e \in E^2_j}  y^*_{e,t}/p_{jt} \le 1$, due to Constraint
~\eqref{cons:arr_1} in \LP~\eqref{LP:offline-a}.

\xhdr{Main theoretical results}.
 \begin{theorem}[Performance of $\alg$ for the integral case]
\label{thm:main-1}
For \ts when all resources are integral with a sparsity of $\ell$, 
$\alg$ with $\eta=\alp=\frac{1}{2\ell}$ achieves a competitive ratio of at least $ \frac{1}{4 \ell}$ using \LP~\eqref{LP:offline-a} as the benchmark.  
\end{theorem}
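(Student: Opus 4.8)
The plan is to invoke Lemma~\ref{lem:bench-lp} to replace $\E[\OPT(\mathcal{I})]$ by the optimal value of \LP~\eqref{LP:offline-a}, so that it suffices to show $\E[\alg(\mathcal{I})] \ge \frac{1}{4\ell}\big(\sum_{e\in E^1} w_e x_e^* + \sum_{e\in E^2,\, t} w_e\, y_{e,t}^*\big)$. By linearity of expectation I would decompose the reward of $\alg$ into a Phase~I part and a Phase~II part and argue edge by edge: for each $e\in E^1$ the expected number of surviving copies is at least $\frac{1}{4\ell}x_e^*$, and for each $e\in E^2$ and $t$ the probability that $e$ is selected at $t$ is at least $\frac{1}{4\ell}y_{e,t}^*$. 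Weighting by $w_e$ and summing then gives the theorem.

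The engine behind both bounds is a single uniform estimate on resource consumption. Fix any resource $k$; since $k$ is integral we have $a_{e,k}=1$ for every $e$ with $k\in\cS_e$ and $B_k\in\mathbb{Z}_{+}$, so ``$e$ is unsafe on $k$'' is exactly the event that the consumption of $k$ accumulated so far has reached $B_k$. Because both phases are scaled by the \emph{same} factor $\eta=\alp=\frac{1}{2\ell}$ while Constraint~\eqref{cons:res_1} bounds the \emph{combined} Phase~I plus Phase~II usage by $B_k$, the expected total raw (pre-removal) consumption of $k$ is at most $\frac{1}{2\ell}\big(\sum_{e\in E^1} x_e^* a_{e,k} + \sum_{e\in E^2,\,t} y_{e,t}^* a_{e,k}\big)\le \frac{B_k}{2\ell}$; in particular the Phase-I-only consumption is at most this as well. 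Markov's inequality then makes $k$ exhausted with probability at most $\frac{1}{2\ell}$, and a union bound over the at most $\ell$ resources in $\cS_e$ shows that any fixed edge is safe with probability at least $\frac12$.

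For Phase~II I would note that when $j$ arrives at $t$, edge $e\in E^2_j$ is sampled with probability $p_{jt}\cdot \alp y_{e,t}^*/p_{jt}=\alp y_{e,t}^*$ and selected iff safe. Since at most one edge is processed per round, no other edge competes at $t$, so the safety of $e$ depends only on the consumption accumulated strictly before $t$; this quantity is independent of the sampling event at $t$ and still has per-resource expectation at most $\frac{B_k}{2\ell}$, so the union bound of the previous paragraph applies and $\Pr[e\text{ selected at }t]\ge \alp y_{e,t}^*\cdot\frac12=\frac{1}{4\ell}y_{e,t}^*$. For Phase~I, the expected number of sampled copies of $e$ is $\E[\rd(\eta x_e^*)]=\eta x_e^*=\frac{1}{2\ell}x_e^*$, and under the sequential random-order removal rule a sampled copy of $e$ survives whenever, for every $k\in\cS_e$, the number of other copies requesting $k$ stays below $B_k$, an event of probability at least $\frac12$ by the same argument; hence the expected number of surviving copies is at least $\frac{1}{4\ell}x_e^*$.

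I expect the main obstacle to be making the Phase~I survival bound fully rigorous. The number of copies $\rd(\eta x_e^*)$ is itself random and contributes to the consumption of the very resources $k\in\cS_e$, so the budget-overflow events are positively correlated with the number of copies of $e$, and conditioning on ``this copy was sampled'' inflates the relevant consumption. I would resolve this by tagging a single copy and bounding the consumption of the \emph{other} copies only, which is independent of whether the tagged copy is present, so the $\frac{1}{2\ell}$ Markov estimate survives after subtracting the tagged copy's own unit of demand; Phase~II avoids this difficulty entirely because distinct rounds are independent. The one place where integrality is genuinely needed is this Markov step, where $a_{e,k}=1$ and $B_k\in\mathbb{Z}_{+}$ let me identify ``unsafe'' with ``consumption $\ge B_k$.''
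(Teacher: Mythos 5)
Your proposal is correct and takes essentially the same route as the paper's proof: use Lemma~\ref{lem:bench-lp} plus linearity of expectation to reduce to per-edge marginal bounds $\E[X_e]\ge \frac{1}{4\ell}x_e^*$ and $\E[Y_{e,t}]\ge \frac{1}{4\ell}y_{e,t}^*$, each obtained by Markov's inequality on the expected consumption of every resource in $\cS_e$ (at most $B_k/(2\ell)$ under the common scaling $\eta=\alp=\frac{1}{2\ell}$, by Constraint~\eqref{cons:res_1}), with the independence of the round-$t$ sampling from the pre-$t$ history handling Phase~II exactly as in Lemma~\ref{lem:ratio-2}. The only (immaterial) deviations are that in Phase~I you use a plain union bound over the at most $\ell$ resources to get safety probability at least $1-\ell\eta=\frac12$ where the paper applies the FKG inequality to get $(1-\eta)^\ell\ge\frac12$, and your tagged-copy device for decoupling a copy of $e$ from the overflow events plays the same role as the paper's conditioning on $Z_e\in\{\lfloor \eta x_e^*\rfloor,\lceil \eta x_e^*\rceil\}$ combined with $\lceil \eta x_e^*\rceil\le x_e^*+1$ --- both correctly resolve the correlation issue you flagged.
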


\begin{theorem}[Performance of $\alg$ for the general case]
\label{thm:main-2}
For \ts when both integral and non-integral resources are involved with  respective sparsities of $\ell_1$ and $\ell_2$, 
$\alg$ with $\eta=\alp=\frac{1}{2\ell}$  achieves a competitive ratio of at least $ \frac{1}{4 \ell} (1-\ep)$ using \LP~\eqref{LP:offline-a} as the benchmark, where $\ell=\ell_1+\ell_2$ and $\ep=2\max \big(1/(B-2), \ell_2 \exp(-B/12+1/6) \big)$ with $B$ being the minimum budget over all non-integral resources.  
\end{theorem}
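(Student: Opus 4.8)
The plan is to lower bound the expected reward of \alg phase by phase and to compare it against the optimum of \LP~\eqref{LP:offline-a}, which by Lemma~\ref{lem:bench-lp} upper bounds $\E[\OPT]$. Let $\{x_e^*,y^*_{e,t}\}$ be the optimal LP solution. By linearity, the reward collected in Phase~I has expectation $\sum_{e\in E^1} w_e\,\eta x_e^*\,\rho^{\mathrm{I}}_e$, where $\rho^{\mathrm{I}}_e$ is the probability that a sampled copy of $e$ survives the removal step, and the reward in Phase~II has expectation $\sum_{e\in E^2,t} w_e\,\alp y^*_{e,t}\,\rho^{\mathrm{II}}_{e,t}$, where $\rho^{\mathrm{II}}_{e,t}$ is the probability that $e$ is safe when attempted at $t$ (here I use that $e$ is attempted at $t$ with probability $p_{jt}\cdot \alp y^*_{e,t}/p_{jt}=\alp y^*_{e,t}$). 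Writing $A=\sum_{e\in E^1}w_e x_e^*$ and $C=\sum_{e\in E^2,t}w_e y^*_{e,t}$ so that the LP optimum is $A+C$, I get $\E[\alg]\ge \eta\,\rho^{\mathrm{I}}A+\alp\,\rho^{\mathrm{II}}C\ge \tfrac{1}{2\ell}\min(\rho^{\mathrm{I}},\rho^{\mathrm{II}})(A+C)$ with $\rho^{\mathrm{I}}=\min_e\rho^{\mathrm{I}}_e$, $\rho^{\mathrm{II}}=\min_{e,t}\rho^{\mathrm{II}}_{e,t}$ and $\eta=\alp=\tfrac{1}{2\ell}$. Thus it suffices to show $\rho^{\mathrm{I}},\rho^{\mathrm{II}}\ge \tfrac12(1-\ep)$, which yields the claimed ratio $\tfrac{1}{4\ell}(1-\ep)$.

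Second, I would control each survival/safe probability by a union bound over the resources an edge touches. Fix an edge $e$; since it can block only through a resource in $\cS_e$ and $|\cS_e\cap\KK_1|\le\ell_1$, $|\cS_e\cap\KK_2|\le\ell_2$, it is enough to bound, for each $k\in\cS_e$, the probability $q_k$ that $k$ is the resource causing the overrun. The common ingredient is that, because $\eta=\alp=\tfrac{1}{2\ell}$ and by the budget Constraint~\eqref{cons:res_1}, the expected total consumption of any resource $k$ by \alg over both phases is at most $\tfrac{1}{2\ell}\bigl(\sum_{e\in E^1}x_e^*a_{e,k}+\sum_{e\in E^2,t}y^*_{e,t}a_{e,k}\bigr)\le B_k/(2\ell)$. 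For every integral resource ($a_{e,k}=1$, $B_k\in\mathbb{Z}_{+}$) a single application of Markov's inequality to the consumption by the other copies gives $q_k\le (B_k/(2\ell))/B_k=\tfrac{1}{2\ell}$, exactly as in the all-integral Theorem~\ref{thm:main-1}; summing over the $\le\ell_1$ integral resources contributes at most $\ell_1/(2\ell)$.

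The crux is the $\le\ell_2$ non-integral resources, where $a_{e,k}$ may be tiny so that the clean event ``$C_k\ge B_k$'' is unavailable and one must argue the consumption stays below $B_k-a_{e,k}\ge B_k-1$. For Phase~I survival the relevant consumption $C_k^{\mathrm{I}}$ involves only the offline copies, whose per-edge contributions $\rd(\eta x_{e'}^*)a_{e',k}$ can exceed $1$; since this sum is ``lumpy'' I would bound it by Markov, getting $q_k\le \tfrac{1}{2\ell}\cdot\tfrac{B_k}{B_k-1}$, so the $\le\ell_2$ non-integral terms add at most $\tfrac{\ell_2}{2\ell}+\tfrac{\ell_2}{2\ell(B-1)}$ and hence $\rho^{\mathrm{I}}\ge\tfrac12\bigl(1-\tfrac{2}{B-2}\bigr)$. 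For Phase~II safety the online consumption up to $t$ is a sum of independent per-round increments, each bounded in $[0,1]$, with mean at most $B_k/(2\ell)$, while the threshold $B_k-1$ leaves a multiplicative gap of order $\ell$; a multiplicative Chernoff bound then makes this blocking probability at most $\exp(-B/12+1/6)$, so the $\le\ell_2$ terms add at most $\ell_2\exp(-B/12+1/6)$ and hence $\rho^{\mathrm{II}}\ge\tfrac12\bigl(1-2\ell_2\exp(-B/12+1/6)\bigr)$. Taking the worse of the two guarantees gives $\min(\rho^{\mathrm{I}},\rho^{\mathrm{II}})\ge\tfrac12(1-\ep)$ with $\ep=2\max\bigl(1/(B-2),\,\ell_2\exp(-B/12+1/6)\bigr)$, and the integral case Theorem~\ref{thm:main-1} is recovered by setting $\ell_2=0$.

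The main obstacle I anticipate is the Phase~II concentration step. First, the running consumption of a non-integral resource at time $t$ also contains the offline survivors, whose contributions are not bounded by $1$, so a pure Chernoff bound does not apply verbatim; I would split the threshold and absorb the offline part by the same Markov estimate as in Phase~I (already covered by the $1/(B-2)$ branch of $\ep$), reserving Chernoff for the genuinely bounded online increments. Second, pinning down the exact constants in $\exp(-B/12+1/6)$ requires care: one must calibrate the deviation parameter of the Chernoff bound against the mean $B_k/(2\ell)$ and track both the ``$-1$'' shift in the threshold $B_k-1$ and the worst case $B_k=B$, which is where the $1/6$ offset appears. Everything else—the linearity decomposition, the union bound over $\cS_e$, and the Markov estimates—is routine once the expected-consumption bound $B_k/(2\ell)$ is in place.
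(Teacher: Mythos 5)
Your overall skeleton is the same as the paper's: lower-bound $\E[X_e]$ and $\E[Y_{e,t}]$ edge by edge against the LP solution, take a union bound over the at most $\ell_1+\ell_2$ resources in $\cS_e$, use Markov for Phase~I and a Chernoff bound for Phase~II (the paper uses the FKG inequality instead of a union bound in Phase~I, but with $\eta=1/(2\ell)$ both give the same final constant). However, your Phase~II treatment of non-integral resources has a genuine gap. You propose to split the threshold $B_k-1$, absorbing the offline consumption $U_1$ by Markov and reserving Chernoff for the online increments $U_{2,t}$. This cannot deliver the exponentially small per-resource failure probability $\exp(-B/12+1/6)$ that the stated $\ep$ requires. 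The split $(\tau_1,\tau_2)$ of $B_k$ between the two phases is dictated by the LP and can be adversarial: if $\tau_1\approx B_k$ and $\tau_2\approx 0$, Markov with threshold $\theta_1$ gives only $\Pr[U_1>\theta_1]\le \eta\tau_1/\theta_1$, a polynomially decaying tail; to keep this at most $\eta(1+\delta)$ you must set $\theta_1\approx\tau_1$, and the per-resource excess over the baseline $\eta$ is then of order $\delta$, while the online Chernoff exponent scales like $\delta B$. Optimizing $\delta$ yields an excess of order $(\log B)/B$ per non-integral resource, not $e^{-\Theta(B)}$, so your route proves the theorem only with $\ep$ inflated to roughly $2\max\bigl(1/(B-2),\,\tilde{O}(\ell_2\log B/B)\bigr)$ --- strictly weaker than the claimed bound. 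The idea you are missing (and it is exactly how the paper's Lemma~\ref{lem:ratio-4} proceeds) is that no split is needed: $Z_{e'}=\rd(\eta x^*_{e'})$ is itself a sum of $\lfloor \eta x^*_{e'}\rfloor$ deterministic unit variables plus one Bernoulli variable, so after scaling by $a_{e',k}\in[0,1]$ \emph{every} summand of the combined consumption $H_{e,t}$ (offline plus online) is an independent or negatively correlated $[0,1]$-bounded variable, with total mean at most $\eta\tau_1+\alp\tau_2\le\alp B_k$ since $\eta=\alp$. A single Chernoff bound on $H_{e,t}$ against the threshold $B_k-1$ then gives $\Pr[H_{e,t}>B_k-1]\le\exp(-B/12+1/6)$ directly; your concern that offline contributions ``are not bounded by $1$'' is resolved by this Bernoulli decomposition, not by Markov.

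A secondary, fixable slip: in your Phase~I Markov estimate you take the threshold to be $B_k-a_{e,k}\ge B_k-1$, but when $Z_e$ rounds up, $e$'s \emph{own} copies consume $\lceil\eta x^*_e\rceil a_{e,k}\le (1+\eta x^*_e)a_{e,k}$, so the correct denominator is $B_k-1-\eta\kappa_2$ with $\kappa_2=x^*_e a_{e,k}$, and your claimed per-resource bound $\frac{1}{2\ell}\cdot\frac{B_k}{B_k-1}$ is not justified as stated. The paper's proof of Lemma~\ref{lem:ratio-3} repairs exactly this with a two-case analysis ($\kappa_2\ge 2$, where the denominator is at least $\kappa_1$ and the bound is $\eta$; and $\kappa_2<2$, where the bound is $\eta(1+\frac{2}{B_k-2})$), which still yields $\rho^{\mathrm{I}}\ge\frac12(1-\ep_1)$ with $\ep_1=2/(B-2)$ under your union bound, so your Phase~I conclusion survives the correction even though the intermediate inequality does not.
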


The main technique used here is a combination of randomized rounding with alterations introduced to solve packing integer programs~\cite{bansal2012solving} and randomized sampling for online resource allocation~\cite{dickerson2019online}. Generally, we cannot beat the ratio of $1/(\ell-1+1/\ell)$ for the integral case with a sparsity of $\ell$ using LP~\eqref{LP:offline-a} as the benchmark, due to the hardness result derived from the set packing problem~\cite{furedi1993}. This suggests that the performance of \alg is almost a constant factor ($1/4$) from the optimal for the integral case. For the general case, Theorem~\ref{thm:main-2} implies that we can simply treat non-integral resources as integral and $\alg$ will achieve nearly the same ratio when the budgets among all non-integral resources are moderately large $\Omega(1/\ep)$. Note that in the absence of a  lower bound on the budgets of all non-integral resources, LP-based randomized samplings can have an arbitrarily bad performance for online resource allocation, as shown in~\cite{dickerson2019online}.

\vspace{-1mm}

\subsection{Integral Resources Only}\label{sec:int}
In this section, we consider the simple case when all resources are integral with a sparsity of $\ell$.

For each $e \in E^1$, let $X_e$ be the random number of copies of $e$ selected in Phase I and for each $e \in E^2$ and $t \in [T]$, let $Y_{e,t}$ indicate if $e$ is selected at $t$ in Phase II by $\alg$. Suppose for a given target constant $\gam \in (0,1)$, we can show  that $\E[X_e] \ge \gam x_e^*$ for all $e \in E^1$ and $\E[Y_{e,t}] \ge \gam y_{e,t}^*$ for all $e \in E^2$ and $t\in [T]$. Then by linearity of expectation, the expected total rewards should be at least $\gam \Big(\sum_{e \in E^1} w_e x_e^*+ \sum_{e \in E^2} w_e y_{e,t}^*\Big)$, which is followed by that $\alg$ achieves an online ratio at least $\gam$ from Lemma~\ref{lem:bench-lp}.

For Phase I, we have the following lemma. 
\begin{lemma}\label{lem:ratio-1}
For each $e\in E^1$, we have $\E[X_e] \ge \eta (1-\eta)^\ell x_e^*$.
\end{lemma}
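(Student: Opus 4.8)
The plan is to analyze the random variable $X_e$ for a fixed edge $e \in E^1$, tracking the two sources of randomness separately: the initial sampling of copies in the first line of Phase I, and the subsequent removal of copies in the alteration step. Write $X_e = \sum_{m} Z_{e,m}$, where we imagine the $\rd(\eta x_e^*)$ initially-sampled copies indexed by $m$, and $Z_{e,m}$ is the indicator that the $m$-th copy of $e$ survives the removal step (i.e. none of the $\ell$ integral resources it requests suffers a budget overrun from the aggregate sampling). I would first establish, using the defining property $\E[\rd(\eta x_e^*)] = \eta x_e^*$, that the expected number of \emph{initially-sampled} copies of $e$ equals $\eta x_e^*$. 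Then by linearity and independence across edges, $\E[X_e]$ equals $\eta x_e^*$ times the probability that a tagged copy of $e$ survives.

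\textbf{Bounding the survival probability.} The heart of the argument is to lower-bound the probability that a fixed tagged copy of $e$ survives the removal step by $(1-\eta)^\ell$. Since $e$ requests at most $\ell$ integral resources (sparsity $\ell$), a union-bound / independence decomposition over the resources $k \in \cS_e$ is natural: the copy survives if, for each requested resource $k$, the total consumption by the \emph{other} sampled copies leaves room for this one. Because resources are integral ($a_{e,k}\in\{0,1\}$, $B_k\in\mathbb{Z}_+$), ``room for this copy on resource $k$'' means the other copies consume strictly less than $B_k$ units. The key quantitative step is to show that for each single resource $k$, conditioned on the tagged copy being present, the probability that resource $k$ is \emph{not} saturated by the remaining copies is at least $1-\eta$. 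I expect this to follow from Constraint~\eqref{cons:res_1} of \LP~\eqref{LP:offline-a}: the expected total consumption of resource $k$ under the scaled solution $\{\eta x_e^*\}$ is at most $\eta B_k$, so a Markov-type or distributional argument caps the saturation probability by $\eta$, leaving survival probability at least $1-\eta$ per resource. Multiplying (or more carefully, using negative correlation / independence across the at most $\ell$ distinct resources) yields the $(1-\eta)^\ell$ factor.

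\textbf{The main obstacle.} The delicate point is justifying the per-resource bound $\Pr[\text{resource }k\text{ has slack for the tagged copy}] \ge 1-\eta$ and then combining across resources to get a clean product $(1-\eta)^\ell$. The sampled copies across different edges are independent, so their contributions to a single resource $k$ are sums of independent integer costs; the subtlety is that I want a bound on $\Pr[\text{total other consumption} \le B_k - 1]$, and I must relate this to the LP constraint $\sum \eta x_e^* a_{e,k} \le \eta B_k$. The cleanest route is a Markov inequality argument: the expected consumption of resource $k$ by copies \emph{other} than the tagged one is at most $\eta B_k$, hence the probability that it reaches the budget $B_k$ is at most $\eta$, so with probability at least $1-\eta$ there is slack. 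The second difficulty is the \emph{independence across the $\ell$ resources}: the events ``resource $k$ has slack'' for different $k \in \cS_e$ need not be independent, so a naive product is not immediate. I would handle this either by invoking a correlation inequality (the survival events across resources should be positively correlated, as is typical for such downward-closed feasibility events) or by a direct union-bound giving $1 - \ell\eta$ and then arguing that the stated $(1-\eta)^\ell$ follows from a sharper independence structure in the sampling—I anticipate the authors exploit that distinct resources receive consumption from disjoint randomness once we condition appropriately, making the product bound rigorous. Establishing this factorization cleanly is where I expect the real work to lie.
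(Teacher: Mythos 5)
Your proposal is correct and takes essentially the same route as the paper's proof: the paper likewise conditions on the rounded value $Z_e \in \{\lfloor \eta x_e^*\rfloor, \lceil \eta x_e^*\rceil\}$ (which handles the dependence between the number of sampled copies and the safety event that you flag), bounds the per-resource overrun probability by $\eta$ via Markov's inequality combined with Constraint~\eqref{cons:res_1} and the estimate $\lceil \eta x_e^*\rceil \le x_e^*+1$, and then multiplies across the at most $\ell$ resources exactly by the correlation inequality you anticipate---the FKG inequality, applicable because the slack events are decreasing in the independent variables $\{Z_{e'}\}$. Note that your fallback union bound would only yield $1-\ell\eta \le (1-\eta)^\ell$, so the correlation-inequality branch is the one that delivers the stated bound $\E[X_e] \ge \E[Z_e]\,(1-\eta)^\ell = \eta(1-\eta)^\ell x_e^*$.
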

\vspace{-2mm}
\begin{proof}
According to \alg, we first sample $\rd(\eta x_e^*)$ copies of edge $e$ and then include all of them iff $e$ is  \emph{safe}, \ie all resources requested by $e$ have no budget overrun. Consider a given $e \in E^1$ and recall that $\cS_e \subseteq \cK_1$ is the set of all integral resources requested by $e$. By the sparsity assumption, we have $|\cS_e| \le \ell$. 

Consider a given $k \in \cS_e$ and thus $a_{e,k}=1$. For each $e' \in E^1$ including $e'=e$, let $Z_{e'}=\rd(\eta x^*_{e'})$.   
Notice that 
\begin{align}
 &\textstyle \Pr\Big[ \sum_{e' \in E^1} Z_{e'} a_{e',k}>B_k~|~ Z_e=\lceil \eta x_e^* \rceil \Big] \label{lem1:ineq1}\\
 &=
\textstyle \Pr\Big[ \sum_{e' \in E^1, e'\neq e} Z_{e'} a_{e',k}\ge B_k+1-\lceil \eta x_e^* \rceil \Big] \label{lem1:ineq2}\\
 & \le \textstyle \Pr\Big[ \sum_{e' \in E^1, e'\neq e} Z_{e'} a_{e',k}\ge B_k-x_e^*\Big] \label{lem1:ineq3}\\
 & \le \textstyle \E\Big[ \sum_{e' \in E^1, e'\neq e} Z_{e'} a_{e',k} \Big]/(B_k-x_e^*) \le \eta \label{lem1:ineq4}
\end{align}

Inequality~\eqref{lem1:ineq3} is due to $\lceil \eta x_e^* \rceil \le \lceil  x_e^* \rceil \le x_e^*+1$. Inequality~\eqref{lem1:ineq4} is due to Markov's inequality and the fact that $\E\Big[ \sum_{e' \in E^1, e'\neq e} Z_{e'} a_{e',k} \Big]=\sum_{e' \in E^1, e' \neq e} \eta x_{e'}^* a_{e',k} \le \eta (B_k-x_e^*)$ from Constraint~\eqref{cons:res_1} in \LP~\eqref{LP:offline-a}. We can verify that in the other case when $Z_e$ gets rounded down to $\lfloor \eta x_e^* \rfloor$, the probability that resource $k$ has a budget overrun should be no larger than that when $Z_e$ is rounded up to $\lceil \eta x_e^* \rceil $. Therefore, we claim that regardless the outcomes of $Z_e=z_e \in \{ \lceil \eta x_e^* \rceil, \lfloor \eta x_e^* \rfloor \}$, there is a budget overrun on resource $k$ with probability at most $\eta$. 

Observe that $\{Z_{e'}|e'\in E^1\}$ are all independent random variables and events $\{ (\sum_{e' \in E^1, e'\neq e} Z_{e'} a_{e',k}\ge B_k-z_e a_{e,k})| k \in \cS_e\}$ are decreasingly likely events for each given $z_e$. Thus applying the FKG inequality~\cite{fortuin1971}, the probability that $e$ is safe should be 
$$\textstyle \Pr\Big[ \bigwedge_{k \in \cS_e} \Big(\sum_{e' \in E^1, e'\neq e} Z_{e'} a_{e',k}\ge B_k-z_e a_{e,k} \Big)\Big] \ge (1-\eta)^\ell$$

Note that the above inequality is valid regardless the choices of $z_e \in \{ \lceil \eta x_e^* \rceil, \lfloor \eta x_e^* \rfloor \}$. Therefore, $\E[X_e] \ge \E[Z_e ]  \cdot (1-\eta)^\ell  = \eta (1-\eta)^\ell x_e^*$. 
\end{proof}

For Phase II, we have the following lemma. 
\begin{lemma}\label{lem:ratio-2}
For each $e\in E^2$ and $t \in [T]$, we have $\E[Y_{e,t}] \ge \alp \Big(1-\ell \cdot \max(\eta, \alp)\Big) y_{e,t}^*$.
\end{lemma}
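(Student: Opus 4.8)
The plan is to isolate, for a fixed edge $e \in E^2_j$ and round $t$, the two sources of randomness that jointly determine $Y_{e,t}$: whether $e$ is \emph{proposed} at $t$ (i.e.\ a type-$j$ vertex arrives and $e$ is sampled in Step~\ref{alg:step3}), and whether $e$ is \emph{safe} at $t$. First I would note that the proposal event depends only on the arrival and the internal sampling at round $t$, which by the independence of arrivals across rounds is independent of everything committed in Phase~I and in rounds $1,\dots,t-1$; the safety event, writing $\mathrm{SF}_{e,t}$ for it, is by contrast a function only of the edges committed strictly before $t$. Hence the two events are independent and $\E[Y_{e,t}] = \Pr[e \text{ proposed at } t]\cdot \Pr[\mathrm{SF}_{e,t}]$. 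The proposal probability is immediate: conditioning on the arrival (probability $p_{jt}$) and the sampling probability $\alp y^*_{e,t}/p_{jt}$ gives $\Pr[e\text{ proposed at }t] = \alp y^*_{e,t}$, so the lemma reduces to showing $\Pr[\mathrm{SF}_{e,t}] \ge 1 - \ell\cdot\max(\eta,\alp)$.

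For the safety bound I would argue resource by resource. Since all resources are integral here, $a_{e,k}=1$ for every $k\in\cS_e$, so $e$ fails to be safe at $t$ precisely when some $k\in\cS_e$ has already reached its full budget $B_k$ before round $t$. A union bound over $\cS_e$, which has size at most $\ell$, reduces the task to showing that each single resource $k\in\cS_e$ is exhausted before $t$ with probability at most $\max(\eta,\alp)$. To handle the correlated safety-removal process I would pass from the \emph{actually committed} consumption $R_{k,t}$ of resource $k$ to the larger \emph{attempted} consumption $\tilde R_{k,t}$, defined as the total demand of $k$ from all copies sampled in Phase~I (before any removal) plus all edges proposed in Phase~II rounds $1,\dots,t-1$ (before the safety filter). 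Since committing only ever discards demand, $R_{k,t}\le\tilde R_{k,t}$, and since the budget is never overrun we have $\{R_{k,t}=B_k\}\subseteq\{\tilde R_{k,t}\ge B_k\}$; Markov's inequality then gives $\Pr[R_{k,t}=B_k]\le \E[\tilde R_{k,t}]/B_k$.

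Finally I would estimate $\E[\tilde R_{k,t}]$ using $\E[\rd(\eta x^*_{e'})]=\eta x^*_{e'}$ and the per-round proposal probability $\alp y^*_{e',s}$, and extend the Phase~II sum from $s\le t-1$ to all of $[T]$, obtaining $\E[\tilde R_{k,t}] \le \eta\sum_{e'\in E^1} x^*_{e'} a_{e',k} + \alp\sum_{e'\in E^2,\, s\in[T]} y^*_{e',s} a_{e',k} \le \max(\eta,\alp)\bigl(\sum_{e'\in E^1} x^*_{e'} a_{e',k} + \sum_{e'\in E^2,\, s\in[T]} y^*_{e',s} a_{e',k}\bigr)\le \max(\eta,\alp)\,B_k$, where the last inequality is exactly the resource constraint~\eqref{cons:res_1} of LP~\eqref{LP:offline-a}. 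Dividing by $B_k$ yields $\Pr[R_{k,t}=B_k]\le\max(\eta,\alp)$, and summing over the at most $\ell$ resources in $\cS_e$ gives $\Pr[\mathrm{SF}_{e,t}]\ge 1-\ell\cdot\max(\eta,\alp)$, which combined with the first paragraph proves the lemma. I expect the only genuinely delicate points to be the independence claim in the first step, i.e.\ cleanly separating the round-$t$ randomness (proposal) from the history-dependent state (safety), and the reduction $R_{k,t}\le\tilde R_{k,t}$, which is what lets a one-line Markov estimate replace any explicit analysis of the correlated removal process.
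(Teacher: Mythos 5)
Your proof is correct and takes essentially the same route as the paper's: you factor $\E[Y_{e,t}]$ into the proposal probability $\alp y^*_{e,t}$ (round-$t$ randomness being independent of the history) times the safety probability, bound each resource's exhaustion probability via Markov's inequality on the attempted consumption, whose expectation is at most $\max(\eta,\alp)B_k$ by Constraint~\eqref{cons:res_1}, and finish with a union bound over the at most $\ell$ resources in $\cS_e$. The only cosmetic differences are that you extend the Phase~II sum to all of $[T]$ where the paper keeps $t'<t$ and bounds $\frac{\eta\tau_1+\alp\tau_2}{B_k}\le\max(\eta,\alp)$, and that you make the coupling $R_{k,t}\le\tilde R_{k,t}$ explicit where the paper leaves it implicit in the inequalities for $\E[U_1]$ and $\E[U_{2,t}]$.
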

\begin{proof}
Consider a given $t \in [T]$, $e=(i,j)\in E^2$ and $k \in \cS_e$. Let $\tau_1=\sum_{e' \in E^1} x_{e'}^* a_{e',k}$ and $\tau_2=\sum_{e' \in E^2, t'<t} y_{e',t'}^* a_{e',k}$. From  Constraint~\eqref{cons:res_1} in \LP~\eqref{LP:offline-a}, $\tau_1+\tau_2 \le B_k$. 

Assume $j$ arrives at time $t$, which occurs with probability $p_{jt}$. Let $U_1$ and $U_{2,t}$ be the respective (random) consumptions of resource $k$ during Phase I and at the beginning of time $t$ during Phase II. Notice that the event occurs that $e$ is safe at $t$ with respect to resource $k$, denoted by $\SF_{e,t,k}$, iff $U_1+U_{2,t} \le B_k-1$ (since $a_{e,k}=1$). 
\begin{align} \label{eqn:lem2}
\textstyle \Pr[\SF_{e,t,k}] &=
\textstyle 1-\Pr[U_1+U_{2,t} \ge B_k] \ge 1-\frac{\E[U_1+U_{2,t}]}{B_k}
\end{align}

Observe that $\E[U_1] \le \sum_{e' \in E^1} \E[\rd(\eta x^*_{e'}) \cdot a_{e',k}] \le \eta \tau_1$. For each $t'<t$ and $j \in J$, let $\bo_{j,t'}$ indicate if $j$ arrives at $t'$. For each $e' \in E^2_j$, let $\bo_{e',t'}$ indicate if $e'$ is sampled when $j$ comes at $t'$. Thus,
\begin{align*}
&\textstyle \E[U_{2,t}]  \le \sum_{t'<t, j \in J, e' \in E^2_j} \E[\bo_{j,t'} \cdot \bo_{e',t'} \cdot a_{e',k}]\\
&\textstyle \le \sum_{t'<t, j \in J, e' \in E^2_j} p_{j,t'} \cdot (\alp y_{e',t'}^*/p_{j,t'}) \cdot a_{e',k} \le \alp \tau_2
\end{align*}
Substituting results on $\E[U_1]$ and $\E[U_{2,t}]$ back to Equation~\eqref{eqn:lem2}, we have 
$$\textstyle \Pr[\SF_{e,t,k}] \ge 1-\frac{\eta \tau_1+\alp \tau_2}{B_k} \ge \textstyle 1-\max(\eta, \alp)$$

The last inequality is due to $\tau_1+\tau_2 \le B_k$. Therefore, for the edge $e=(i,j)$,
\begin{align*}
\textstyle \E[Y_{e,t}]&=\Pr[(\bo_{j,t}=1) \wedge (\bo_{e,t}=1) \wedge (\wedge_{k \in \cS_e}\SF_{e,t,k})] \\
& \ge \textstyle p_{jt} \cdot (\alp y_{e,t}^*/p_{jt})\cdot \Big(1-\ell \cdot \max(\eta, \alp)\Big)
\end{align*}
Thus, we get our claim.
\end{proof}

\xhdr{Proof of main Theorem~\ref{thm:main-1}}. 
\begin{proof}
From Lemmas~\ref{lem:ratio-1} and \ref{lem:ratio-2}, the final online competitive ratio achieved by $\alg(\eta, \alp)$ should be $\min \Big( \eta(1-\eta)^\ell,  \alp\big(1-\ell \cdot \max(\eta, \alp)\big)\Big)$. By choosing $\eta=\alp=1/(2\ell)$, we see that  the final ratio is $1/(4\ell)$. 
\end{proof}

\subsection{Extension to with Non-integral Resources}
In this section, we consider a general case when both integral and non-integral cases are involved with the respective sparsities of $\ell_1$ and $\ell_2$. Let $B=\min_{k \in \cK_2} B_k$ be the smallest budget over all \emph{non-integral resources} and $\ell=\ell_1+\ell_2$. We show that the performance of $\alg$ designed for integral resources only will deteriorate slightly when applied to the case when both integral and non-integral resources are involved, provided that $B$ is moderately large. The analysis in this section is parallel to that in Section~\ref{sec:int}. As before, for each $e \in E^1, e'\in E^2$ and $t\in[T]$, let $X_e$ be a random number of copies of $e$ selected in Phase I and $Y_{e',t}$ indicate if $e'$ is selected at $t$ in Phase II. We focus on analyzing the algorithm $\alg(\eta,\alp)$ with $\eta=\alp=1/(2\ell)$. 

For Phases I and II, we have the following two respective lemmas. 
\begin{lemma}\label{lem:ratio-3}
For each $e\in E^1$, we have $\E[X_e] \ge  x_e^* \frac{1-\ep_1}{4\ell}$ with $\ep_1 \doteq 2/(B-2)$. 
\end{lemma}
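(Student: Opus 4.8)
The plan is to run the proof of Lemma~\ref{lem:ratio-1} in parallel, writing $\E[X_e]=\E[Z_e\,\mathbf{1}(e\text{ is safe})]$ with $Z_e=\rd(\eta x_e^*)$, conditioning on the outcome $z_e\in\{\lfloor\eta x_e^*\rfloor,\lceil\eta x_e^*\rceil\}$, and lower-bounding the conditional survival probability of $e$ uniformly in $z_e$. The only new wrinkle is that $\cS_e$ now splits into an integral part $\cS_e\cap\KK_1$ (of size at most $\ell_1$) and a non-integral part $\cS_e\cap\KK_2$ (of size at most $\ell_2$), which must be controlled by separate tail bounds. For each integral $k\in\cS_e\cap\KK_1$ we have $a_{e,k}=1$, and the Markov estimate of Lemma~\ref{lem:ratio-1} applies verbatim to give $\Pr[\text{overrun on }k\mid Z_e=z_e]\le\eta$, \ie a survival factor at least $1-\eta$ per integral resource.

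The crux is the non-integral part. Fix $k\in\cS_e\cap\KK_2$, so $a_{e,k}\in(0,1]$ and $B_k\ge B$. Conditioning on $Z_e=z_e$, the overrun event is $\sum_{e'\neq e}Z_{e'}a_{e',k}>B_k-z_e a_{e,k}$; Markov's inequality together with Constraint~\eqref{cons:res_1} bounds its probability by $\eta(B_k-x_e^* a_{e,k})/(B_k-\lceil\eta x_e^*\rceil a_{e,k})$. Bounding the denominator below via $\lceil\eta x_e^*\rceil\le\eta x_e^*+1$ and $a_{e,k}\le1$, I would view the estimate $\eta(B_k-c)/(B_k-\eta c-1)$ as a function of $e$'s own contribution $c=x_e^* a_{e,k}$; since it is decreasing in $c$ once $B_k(1-\eta)>1$ (which holds as $\eta\le1/2$ and $B$ exceeds a small constant), it is maximized at $c=0$, giving the per-resource overrun bound $\eta B_k/(B_k-1)\le\eta(1+2/(B-2))$. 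Hence each non-integral resource contributes a survival factor at least $1-\eta(1+2/(B-2))$.

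To recombine, I would note, exactly as in Lemma~\ref{lem:ratio-1}, that the ``safe on $k$'' events are simultaneously decreasing in the independent variables $\{Z_{e'}\}$, so the FKG inequality~\cite{fortuin1971} yields $\Pr[e\text{ safe}\mid z_e]\ge(1-\eta)^{\ell_1}\big(1-\eta(1+2/(B-2))\big)^{\ell_2}$ uniformly in $z_e$, whence $\E[X_e]\ge\eta x_e^*\,(1-\eta)^{\ell_1}\big(1-\eta(1+2/(B-2))\big)^{\ell_2}$. Setting $\eta=1/(2\ell)$, I would use $(1-\eta)^{\ell}\ge1/2$ and a Bernoulli-type estimate to pull the extra $2/(B-2)$ factor out of the $\ell_2$ non-integral terms; since $\ell_2\le\ell$ and $\eta/(1-\eta)=1/(2\ell-1)$, the accumulated loss is at most $2/(B-2)$, leaving $\E[X_e]\ge\frac{1}{4\ell}\big(1-2/(B-2)\big)x_e^*=\frac{1-\ep_1}{4\ell}x_e^*$ as claimed.

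I expect the main obstacle to be the non-integral tail estimate: forcing the overrun probability down to essentially $\eta$ rather than a constant multiple of it. The hazard is that $e$'s own rounded contribution $\lceil\eta x_e^*\rceil a_{e,k}$ eats into the slack $B_k-z_e a_{e,k}$ in the Markov denominator, and a careless relaxation (say, replacing $\eta c$ by $c$) inflates the bound toward $2\eta$, which would collapse the whole ratio. The fix is the monotonicity observation that the worst case is $c=x_e^* a_{e,k}=0$, precisely where the lower bound $B_k\ge B$ on the non-integral budgets enters and caps the excess at $O(1/B)$.
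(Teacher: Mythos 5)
Your proof is correct and takes essentially the same route as the paper's: condition on the rounded value $Z_e=\rd(\eta x_e^*)$, bound the overrun probability of each resource via Markov's inequality together with Constraint~\eqref{cons:res_1}, combine the per-resource safety events with the FKG inequality, and conclude with $\eta=1/(2\ell)$. The only local differences are mild sharpenings of the same estimates: where the paper splits into the cases $\kappa_2\ge 2$ and $\kappa_2<2$ to reach the non-integral overrun bound $\eta(1+2/(B-2))$, you note that $\eta(B_k-c)/(B_k-\eta c-1)$ is decreasing in $c$ (valid since $B_k(1-\eta)>1$) and evaluate at $c=0$, and your product $(1-\eta)^{\ell_1}\bigl(1-\eta(1+\ep_1)\bigr)^{\ell_2}$ refines the paper's uniform $(1-\eta(1+\ep_1))^{\ell}$, with both factorizations yielding the claimed $\frac{1-\ep_1}{4\ell}x_e^*$ via the same Bernoulli-type estimate.
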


\begin{lemma}\label{lem:ratio-4}
For each $e\in E^2$ and $t \in [T]$, we have $\E[Y_{e,t}] \ge y_{e,t}^* \frac{1-\ep_2}{4 \ell}$ with $\ep_2=2\ell_2 \exp(-B/12+1/6)$.  
\end{lemma}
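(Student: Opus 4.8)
The plan is to mirror the structure of Lemma~\ref{lem:ratio-2}. For $e=(i,j)$ I would start from the identity $\E[Y_{e,t}]=\Pr[(\bo_{j,t}=1)\wedge(\bo_{e,t}=1)\wedge(\wedge_{k\in\cS_e}\SF_{e,t,k})]$, noting that the arrival of $j$ and the sampling of $e$ at round $t$ are independent of the safeness events $\SF_{e,t,k}$, which depend only on Phase~I and on Phase~II rounds $t'<t$. Since $\bo_{j,t}=1$ with probability $p_{jt}$ and $e$ is sampled with probability $\alp y_{e,t}^*/p_{jt}$ given arrival, it suffices to lower bound $\Pr[\wedge_{k\in\cS_e}\SF_{e,t,k}]$ by $(1-\ep_2)/2$; plugging in $\alp=1/(2\ell)$ then yields $\E[Y_{e,t}]\ge\frac{1-\ep_2}{4\ell}y_{e,t}^*$.

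To bound $\Pr[\wedge_k\SF_{e,t,k}]$ I would use a union bound over $\cS_e$, split into integral resources $\cS_e\cap\cK_1$ (at most $\ell_1$) and non-integral resources $\cS_e\cap\cK_2$ (at most $\ell_2$). For each integral $k$ I would reuse verbatim the Markov argument of Lemma~\ref{lem:ratio-2}: with $\eta=\alp=1/(2\ell)$ and Constraint~\eqref{cons:res_1}, $\Pr[\neg\SF_{e,t,k}]\le\max(\eta,\alp)=1/(2\ell)$. For each non-integral $k$, where the integrality slack is unavailable, I would instead prove a concentration bound $\Pr[\neg\SF_{e,t,k}]\le\exp(-B_k/12+1/6)\le\exp(-B/12+1/6)$. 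Assembling the two pieces gives $\Pr[\wedge_k\SF_{e,t,k}]\ge 1-\frac{\ell_1}{2\ell}-\ell_2\exp(-B/12+1/6)$, and bounding $\ell_1\le\ell$ so that $\frac{\ell_1}{2\ell}\le\frac12$ turns this into $\frac12-\ell_2\exp(-B/12+1/6)=\frac{1-\ep_2}{2}$, exactly as needed.

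The main obstacle is the concentration bound for a non-integral $k$. Let $S=U_1+U_{2,t}$ be the total consumption of $k$ through Phase~I and the first $t-1$ rounds of Phase~II; as in Lemma~\ref{lem:ratio-2}, $\E[S]\le\eta\tau_1+\alp\tau_2\le B_k/(2\ell)\le B_k/2$, while $\neg\SF_{e,t,k}$ forces $S\ge B_k-a_{e,k}\ge B_k-1$. The subtlety is that $S$ is not directly a sum of $[0,1]$-bounded independent variables: the Phase~I term $\rd(\eta x_{e'}^*)\,a_{e',k}$ can exceed $1$ when $x_{e'}^*$ is large, so the standard Chernoff moment bound does not apply as is. I would therefore split each $\rd(\eta x_{e'}^*)=\lfloor\eta x_{e'}^*\rfloor+\mathrm{Ber}(\eta x_{e'}^*-\lfloor\eta x_{e'}^*\rfloor)$, absorb the deterministic floor terms into a constant $c_1$ that only lowers the effective threshold, and retain a sum of independent $[0,1]$ increments (the Bernoulli fractional parts together with the at-most-one selection per Phase~II round).

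A multiplicative Chernoff bound applied to this reduced sum then finishes the argument: its deviation above its own mean is $(B_k-1-c_1)-(\E[S]-c_1)=B_k-1-\E[S]\ge B_k/2-1$, independent of the absorbed constant $c_1$, so the tail is of order $\exp(-\Theta(B_k))$, and optimizing the constant delivers the stated $\exp(-B_k/12+1/6)$. I expect the two fiddly points to be verifying that all increments are genuinely independent (Phase~I samples across distinct edges and Phase~II arrivals across distinct rounds) and tracking the additive constants carefully so that the exponent reduces cleanly to $-B_k/12+1/6$ rather than a sharper but less tidy expression.
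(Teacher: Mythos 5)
Your proposal is correct and takes essentially the same route as the paper's own proof: the same factorization of $\E[Y_{e,t}]$ into the arrival/sampling probability at round $t$ times the safety probability, the same union bound splitting $\cS_e$ into integral resources (handled by the Markov argument reused verbatim from Lemma~\ref{lem:ratio-2}, giving $1/(2\ell)$ each) and non-integral resources (handled by a multiplicative Chernoff bound after decomposing each $\rd(\eta x^*_{e'})$ into its floor plus a fractional Bernoulli part, with per-round Phase~II contributions in $[0,1]$), and the same assembly $1-\ell_1/(2\ell)-\ell_2\exp(-B/12+1/6)\ge\frac{1-\ep_2}{2}$ yielding the factor $\frac{1-\ep_2}{4\ell}$. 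The only cosmetic difference is that you absorb the deterministic floor terms into a constant $c_1$ before applying Chernoff, whereas the paper keeps them inside the sum $H_{e,t}$ as mean-$1$ Bernoulli summands and bounds the tail directly using $\E[H_{e,t}]\le\alp B_k$; both yield the stated $\exp(-B/12+1/6)$ tail.
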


We defer the proofs of Lemmas~\ref{lem:ratio-3} and~\ref{lem:ratio-4} to the appendix.

\xhdr{Proof of main Theorem~\ref{thm:main-2}}. 
\begin{proof}
From Lemmas~\ref{lem:ratio-3} and \ref{lem:ratio-4}, the final online competitive ratio achieved by $\alg$ with $\eta=\alp=1/(2\ell)$ should be  at least $\frac{1}{4 \ell} (1-\ep)$, where $\ell=\ell_1+\ell_2$ and $\ep=2\max \big(1/(B-2), \ell_2 \exp(-B/12+1/6) \big)$ with $B$ being the minimum budget over all non-integral resources.  
\end{proof}

\section{Experiments}


\begin{figure*}[!ht]
	\centering
	\includegraphics[width=\textwidth]{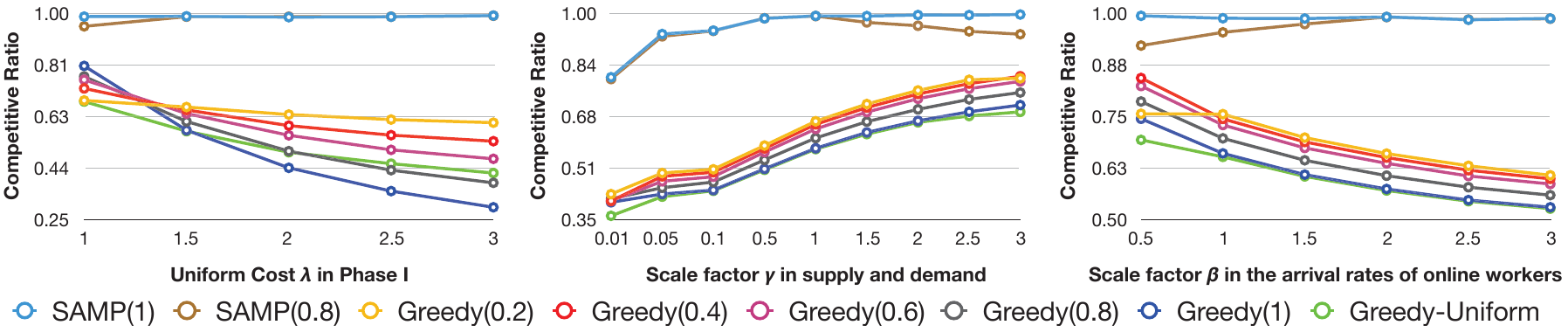}
    \caption{Real dataset: the change of competitive ratios when varying uniform cost in Phase I (Left), scale factor in supply and demand (Middle), and scale factor in the arrival rates of online workers (Right). Default setting is $\lam=1.5, \gam=2, \beta=1$.}
    \vspace{-3mm}
    \label{fig:cr_real} 
\end{figure*}
\vspace{-2mm}

\xhdr{The bike sharing dataset and preprocessing}.
We use the Citibike dataset in New York City,\footnote{\url{https://www.citibikenyc.com/system-data}}
which contains the trip histories for trips and real-time data for hundreds of stations across Manhattan, Brooklyn, Queens and Jersey City.
Each trip record includes the starting station, destination station, the starting time at which the trip was initiated and the ending time at which the trip ended.
In our experiments, we focus on the city of Manhattan with $374$ active bike stations.

We partition all stations into $50$ sites by applying $K$-medians clustering method, based on the geographical location and Manhattan distance. Note that the clustering method may converge to a local optimum.
Basically, we have two kinds of sites, \emph{supplying sites} with a large check-in amount and thus abundant bikes available, and \emph{demanding sites} with a high check-out demand. 
Since not all sites are highly imbalanced,
we focus on the rebalancing among the top $10$ \emph{supplying sites} and the top $10$ \emph{demanding sites}, which are denoted by $\cC_1$ and $\cC_2$, respectively.
The total set of task types $I = \cC_1 \times \cC_2$ and each $i =(a,b)\in I$ represents a task type of moving one bike from $a \in \cC_1$ to $b \in \cC_2$. Focus on the morning rush hour of 8 AM to 9 AM from Monday to Thursday in September and October of 2017 (34 days in total). 
For each $a \in \cC_1$ and $b \in \cC_2$, we first compute the average supply ($s_a$) at site $a$ and the average demand ($d_b$) at $b$ and then set the final supply  $c_a = \gamma \cdot s_a$ for $a$ and the final demand $c_b = \gamma \cdot d_b$ for $b$, where $\gamma$ is a parameter adjusting the scale of supply and demand overall.
We assume that for TBR, one unit task of each type incurs a uniform cost $\lam \ge 1$\footnote{Later we will set the maximum cost of assignments in CBR is $1$; thus, $\lam$ can be viewed as the relative cost of TBR to CBR.}, which captures 
the labor cost of loading  one bike up and off a truck and the amortized cost of renting and operating a truck. 

Now we discuss the setting for CBR. Let $\cL$ be the collection of $50$ sites. For each pair $(c,d) \in \cL \times \cL$, we create a worker type $j=(c,d)$ such that $j$ has the starting and ending sites of $c$ and $d$, respectively. For a given worker of type $j=(c,d)$, we set that it is qualified for task $i=(a,b)$ if $D(c,a) + D(b,d) \leq \tau$, 
where $D(\cdot)$ refers to the Manhattan distance between two sites and $\tau$ is
a given threshold for the additional walking distance. For each assignment $e=(i,j)$, the cost of payment for worker $j$ is set as $\cost(e)=\rho + (1-\rho)\frac{D(c,a) + D(b,d)}{\tau}$, which consists of a basic rate $\rho \in [0,1]$ (a parameter) and an incentive part proportional to the travel distance due to the rebalancing work. Here we scale down such that the largest cost of payment is $1$ among all assignments in CBR. 

For each worker of type $j$,  we first compute the average number of trips $r'_j$ during the rush hour over the 34 days. Then we introduce a parameter $\beta$ to adjust the degree of peak hour and set the final arrival rate $r_j=\beta \cdot r'_j$ and the total arrivals $T=\sum_{j \in J} r_j$. We consider a special arrival setting of KIID, where arrival distributions are assumed identical and independent throughout the online phase. This is mainly due to the short time window considered here (\ie the rush hour of 8 AM to 9 AM). The same setting is adopted by~\cite{aaai-19-stable,AAMAS18} for ridesharing and crowdsourcing. After splitting the online phase into $T$ rounds, 
the arrival distributions are set as $p_{jt}=p_j= r_j / T$ for every $t \in [T]$ and $j \in J$ such that $\sum_{j \in J}p_j = 1$.
For each task $i$, we assign it with a uniformly random weight $w_i\in [0,1]$ and assume all assignments with respect to $i$ return a profit/weight of $w_i$ regardless of TBR or CBR. Set the default total budget as $\kappa \cdot A$, where $\kappa=0.5$ and $A$ is the expected minimum cost of payment to complete all tasks. Since the average cost of CBR is less than that of TBR for each task, we set $A$ as the product of the average cost of all assignments in CBR and the total number of all potential tasks (equal to $\min(\sum_{a \in \cC_1}c_a,\sum_{b \in \cC_2}c_b)$).

\xhdr{Algorithms}. 
By default, we set $\alp=1$ and test $\alg(\eta)$ (which is short for $\alg(\eta,1)$) with $\eta=1$ and $\eta=0.8$, respectively, against several heuristic baselines, namely \gre and Greedy-Uniform. Note that for $\alg(1)$ and $\alg (0.8)$, they independently sample $ \rd(x_e^*) $ and $ \rd( 0.8 x_e^*)$ copies of edges for each $e \in E^1$. For each heuristic $\gre(\del)$ with $\del \in \{0.2,0.4,0.6,0.8,1\}$, it first allocates at most a fraction $\del$ of the total budget ($\del \cdot \bfB$) to  Phase I and all the rest goes to Phase II. In Phase I, $\gre(\del)$ greedily chooses edges $e$ sequentially in a decreasing order of their weights until allocated budgets are exhausted. In Phase II when a worker of type $j$ arrives, $\gre (\del)$ will always assign  $j$ with an assignment $e \in E_j^2$ which has the maximum weight among all safe choices. Greedy-Uniform can be viewed as a randomized version of \gre: it first samples a uniform value $q \in [0,1]$ and then runs $\gre(q)$. For each given setting, we run all algorithms for $1000$ times and take the average as the final performance. We compare the performance of each algorithm against the optimal value to the benchmark LP~\eqref{LP:offline-a} and use that ratio as the final competitive ratio achieved.
Note that the algorithms designed for the MBOA model~\cite{dickerson2019online} are applicable in Phase II only, a future direction is to construct a new LP for these algorithms which can capture the features of the \ts model.

\xhdr{Results on the real dataset}.
Figure~\ref{fig:cr_real} (Left) shows the effect of $\lambda$, which captures the relative ratio of cost in Phase I to that in Phase II. 

The results show that for each given $\lam \ge 1.5$, the performances of $\gre(\del)$ decreases as $\del$ increases.
What is more, the neighboring gaps widen when $\lam$ increases. Observe that a larger value of $\lam$ implies more costly TBR is and thus, it will be more profitable to allocate more budgets to Phase II. This explains why $\gre(\del)$ achieves a higher ratio when $\del$ is smaller (thus a higher fraction of budget goes to Phase II).  However, $\alg(1)$ and $\alg(0.8)$ universally beat all heuristics with a constant gap around $0.3$ in the competitive ratios. Figure~\ref{fig:cr_real} (Middle) shows the effect of $\gamma$, which captures the scale of supply and demand. Note that when $\gamma$ increases, we increase 
our default budget value $A$ proportionally. Results show that all heuristics have an increasing performance though $\alg(1)$ and $\alg(0.8)$ strictly dominate all of them. This is due to the fact that the more resources we have (with a larger $\gam$), the less planning we need. Thus, the superiority of our LP-based policies is reduced when resources are more abundant. Figure~\ref{fig:cr_real} (Right) shows the effect of $\beta$, which captures the different scale of arrival rates of worker types. When $\beta$ increases, we have more arrivals of low-cost workers in Phase II. Thus, we expect that the optimal policy will allocate more budget to Phase II. However, each   $\gre(\del)$ sticks with a fixed fraction of budget on each phase. This can explain why each $\gre(\del)$ has a decreasing performance. Again, our LP-based policies dominate all the greedy-based heuristics. 

Overall, Figure~\ref{fig:cr_real} suggests that our LP-based policies can optimize the budget allocation in Phases I and II to well respond to different costs in the two phases. Additionally, they have robust performances, which universally dominate all greedy-based heuristics under various settings. 

Furthermore, the ratios achieved by $\alg(1)$ and $\alg(0.8)$ are far larger than the lower bounds shown in Theorem~\ref{thm:main-2}, however. 
This is due to real-world settings are often faraway from the theoretical worst-case scenario.
\vspace{-2mm}
\section{Conclusions and Future Directions}

In this paper, we proposed a unified model \ts, which incorporates both offline and online resource allocation into a single framework.
Extensive experimental results on the real dataset show the robustness and effectiveness of our LP-based approaches in a wide range of settings. 
We observe that competitive ratios on experimental results, although guaranteed, are much above those theoretical lower bounds. This fact suggests that the hypothetical worst-case scenario has a structure faraway from those in the real world. A natural direction is to figure out the reasons behind the big gap between the actual performances and theoretical lower bounds. Can we improve the current ratio of $1/(4\ell)$ further? Another interesting question is whether we can remove the requirement of a lower bound of budgets on all non-integral resources, as shown in Theorem~\ref{thm:main-2}.
\section*{Acknowledgments}
Yifan Xu and Jun Tao's research was partially supported by the CERNET Southeastern China (North) Regional Network Center, CIGRI and the National Key Research and Development Program of China (No. 2018YFB1800205).
Pan Xu was partially supported by NSF CRII Award IIS-1948157.
Jianping Pan's research was partially supported by NSERC and CFI of Canada.
The authors would like to thank the anonymous reviewers for their helpful feedback.

\clearpage
{\small
\bibliographystyle{named}
\bibliography{refs}

\begin{thebibliography}{}

\bibitem[\protect\citeauthoryear{Ashlagi \bgroup \em et al.\egroup
  }{2019}]{ashlagi2019edge}
Itai Ashlagi, Maximilien Burq, Chinmoy Dutta, Patrick Jaillet, Chris Sholley,
  and Amin Saberi.
\newblock Edge weighted online windowed matching.
\newblock In {\em Proceedings of the Nineteenth ACM Conference on Economics and
  Computation}, 2019.

\bibitem[\protect\citeauthoryear{Assadi \bgroup \em et al.\egroup
  }{2015}]{assadi2015online}
Sepehr Assadi, Justin Hsu, and Shahin Jabbari.
\newblock Online assignment of heterogeneous tasks in crowdsourcing markets.
\newblock In {\em AAAI-HComp}, 2015.

\bibitem[\protect\citeauthoryear{Bansal \bgroup \em et al.\egroup
  }{2012}]{bansal2012solving}
Nikhil Bansal, Nitish Korula, Viswanath Nagarajan, and Aravind Srinivasan.
\newblock Solving packing integer programs via randomized rounding with
  alterations.
\newblock {\em Theory of Computing}, 8(1), 2012.

\bibitem[\protect\citeauthoryear{Bei and Zhang}{2018}]{BeiZ18}
Xiaohui Bei and Shengyu Zhang.
\newblock Algorithms for trip-vehicle assignment in ride-sharing.
\newblock In {\em AAAI}, 2018.

\bibitem[\protect\citeauthoryear{Chen \bgroup \em et al.\egroup
  }{2016}]{chen2016dynamic}
Xi~Chen, Will Ma, David Simchi-Levi, and Linwei Xin.
\newblock Dynamic recommendation at checkout under inventory constraint.
\newblock {\em http://dx.doi.org/10.2139/ssrn.2853093}, 2016.

\bibitem[\protect\citeauthoryear{Chenthamarakshan \bgroup \em et al.\egroup
  }{2012}]{chenthamarakshan2012systems}
V.E. Chenthamarakshan, N.~Kambhatla, R.C. Kanjiranthinkal, A.K.R. Singh, and
  K.~Visweswariah.
\newblock Systems and methods for matching candidates with positions based on
  historical assignment data, May~17 2012.
\newblock US Patent App. 12/944,868.

\bibitem[\protect\citeauthoryear{Dickerson \bgroup \em et al.\egroup
  }{2018a}]{DickersonAAAI18}
John~P. Dickerson, Karthik~Abinav Sankararaman, Aravind Srinivasan, and Pan Xu.
\newblock Allocation problems in ride-sharing platforms: Online matching with
  offline reusable resources.
\newblock In {\em AAAI}, 2018.

\bibitem[\protect\citeauthoryear{Dickerson \bgroup \em et al.\egroup
  }{2018b}]{AAMAS18}
John~P. Dickerson, Karthik~Abinav Sankararaman, Aravind Srinivasan, and Pan Xu.
\newblock Assigning tasks to workers based on historical data: Online task
  assignment with two-sided arrivals.
\newblock In {\em AAMAS}, pages 318--326, 2018.

\bibitem[\protect\citeauthoryear{Dickerson \bgroup \em et al.\egroup
  }{2019}]{dickerson2019online}
John~P Dickerson, Karthik~Abinav Sankararaman, Kanthi~Kiran Sarpatwar, Aravind
  Srinivasan, Kun-Lung Wu, and Pan Xu.
\newblock Online resource allocation with matching constraints.
\newblock In {\em AAMAS}, pages 1681--1689, 2019.

\bibitem[\protect\citeauthoryear{Duan and Wu}{2019}]{duan2019optimizing}
Yubin Duan and Jie Wu.
\newblock Optimizing the crowdsourcing-based bike station rebalancing scheme.
\newblock In {\em ICDCS}, 2019.

\bibitem[\protect\citeauthoryear{Fortuin \bgroup \em et al.\egroup
  }{1971}]{fortuin1971}
C.~M. Fortuin, P.~W. Kasteleyn, and J.~Ginibre.
\newblock Correlation inequalities on some partially ordered sets.
\newblock {\em Communications in Mathematical Physics}, 22(2):89--103, 1971.

\bibitem[\protect\citeauthoryear{F{\"u}redi \bgroup \em et al.\egroup
  }{1993}]{furedi1993}
Zolt{\'a}n F{\"u}redi, Jeff Kahn, and Paul~D. Seymour.
\newblock On the fractional matching polytope of a hypergraph.
\newblock {\em Combinatorica}, 13(2), 1993.

\bibitem[\protect\citeauthoryear{Ghodsi \bgroup \em et al.\egroup
  }{2012}]{ghodsi2012multi}
Ali Ghodsi, Vyas Sekar, Matei Zaharia, and Ion Stoica.
\newblock Multi-resource fair queueing for packet processing.
\newblock {\em ACM SIGCOMM}, 2012.

\bibitem[\protect\citeauthoryear{Ghodsi \bgroup \em et al.\egroup
  }{2013}]{ghodsi2013choosy}
Ali Ghodsi, Matei Zaharia, Scott Shenker, and Ion Stoica.
\newblock Choosy: Max-min fair sharing for datacenter jobs with constraints.
\newblock In {\em ACM ECCS}, 2013.

\bibitem[\protect\citeauthoryear{Guedes \bgroup \em et al.\egroup
  }{2014}]{recentPolice}
R.~Guedes, V.~Furtado, and T.~Pequeno.
\newblock Multiagent models for police resource allocation and dispatch.
\newblock In {\em 2014 IEEE Joint Intelligence and Security Informatics
  Conference}, 2014.

\bibitem[\protect\citeauthoryear{Haider \bgroup \em et al.\egroup
  }{2018}]{haider2018inventory}
Zulqarnain Haider, Alexander Nikolaev, Jee~Eun Kang, and Changhyun Kwon.
\newblock Inventory rebalancing through pricing in public bike sharing systems.
\newblock {\em European Journal of Operational Research}, 270(1):103--117,
  2018.

\bibitem[\protect\citeauthoryear{Ho and Vaughan}{2012}]{ho2012online}
Chien-Ju Ho and Jennifer~Wortman Vaughan.
\newblock Online task assignment in crowdsourcing markets.
\newblock In {\em AAAI}, 2012.

\bibitem[\protect\citeauthoryear{Huang \bgroup \em et al.\egroup
  }{2019}]{huang2019optimal}
Taoan Huang, Bohui Fang, Hoon Oh, Xiaohui Bei, and Fei Fang.
\newblock Optimal trip-vehicle dispatch with multi-type requests.
\newblock In {\em AAMAS}, pages 2024--2026, 2019.

\bibitem[\protect\citeauthoryear{Joe-Wong \bgroup \em et al.\egroup
  }{2013}]{joe2013multiresource}
Carlee Joe-Wong, Soumya Sen, Tian Lan, and Mung Chiang.
\newblock Multiresource allocation: Fairness-efficiency tradeoffs in a unifying
  framework.
\newblock {\em IEEE/ACM TON}, 2013.

\bibitem[\protect\citeauthoryear{Lee \bgroup \em et al.\egroup
  }{1979}]{Lee1979}
Sang~M. Lee, Lori~Sharp Franz, and A.~James Wynne.
\newblock Optimizing state patrol manpower allocation.
\newblock {\em Journal of the Operational Research Society}, 30(10), Oct 1979.

\bibitem[\protect\citeauthoryear{Li \bgroup \em et al.\egroup
  }{2018}]{li2018dynamic}
Yexin Li, Yu~Zheng, and Qiang Yang.
\newblock Dynamic bike reposition: A spatio-temporal reinforcement learning
  approach.
\newblock In {\em ACM SIGKDD}, 2018.

\bibitem[\protect\citeauthoryear{Li \bgroup \em et al.\egroup
  }{2020}]{Li2020TripVehicleAA}
Songhua Li, M.~Li, and Victor C.~S. Lee.
\newblock Trip-vehicle assignment algorithms for ride-sharing.
\newblock In {\em COCOA}, 2020.

\bibitem[\protect\citeauthoryear{Liu \bgroup \em et al.\egroup
  }{2016}]{liu2016rebalancing}
Junming Liu, Leilei Sun, Weiwei Chen, and Hui Xiong.
\newblock Rebalancing bike sharing systems: A multi-source data smart
  optimization.
\newblock In {\em ACM SIGKDD}, 2016.

\bibitem[\protect\citeauthoryear{Lowalekar \bgroup \em et al.\egroup
  }{2018}]{Patrick-18-JAI}
Meghna Lowalekar, Pradeep Varakantham, and Patrick Jaillet.
\newblock Online spatio-temporal matching in stochastic and dynamic domains.
\newblock {\em Artificial Intelligence}, 261:71 -- 112, 2018.

\bibitem[\protect\citeauthoryear{Ma and Simchi-Levi}{2017}]{ma2017online}
Will Ma and David Simchi-Levi.
\newblock Online resource allocation under arbitrary arrivals: Optimal
  algorithms and tight competitive ratios.
\newblock {\em http://dx.doi.org/10.2139/ssrn.2989332}, 2017.

\bibitem[\protect\citeauthoryear{O'Mahony and Shmoys}{2015}]{o2015data}
Eoin O'Mahony and David~B Shmoys.
\newblock Data analysis and optimization for ({C}iti) bike sharing.
\newblock In {\em AAAI}, 2015.

\bibitem[\protect\citeauthoryear{Pan \bgroup \em et al.\egroup
  }{2019}]{pan2019drl}
Ling Pan, Qingpeng Cai, Zhixuan Fang, Pingzhong Tang, and Longbo Huang.
\newblock A deep reinforcement learning framework for rebalancing dockless bike
  sharing systems.
\newblock In {\em AAAI}, 2019.

\bibitem[\protect\citeauthoryear{Raviv \bgroup \em et al.\egroup
  }{2013}]{raviv2013static}
Tal Raviv, Michal Tzur, and Iris~A Forma.
\newblock Static repositioning in a bike-sharing system: models and solution
  approaches.
\newblock {\em EURO Journal on Transportation and Logistics}, 2(3):187--229,
  2013.

\bibitem[\protect\citeauthoryear{Shumate and
  Crowther}{1966}]{shumate1966quantitative}
Robert~P Shumate and Richard~F Crowther.
\newblock Quantitative methods for optimizing the allocation of police
  resources.
\newblock {\em J. Crim. L. Criminology \& Police Sci.}, 57, 1966.

\bibitem[\protect\citeauthoryear{Singla \bgroup \em et al.\egroup
  }{2015}]{singla2015incentivizing}
Adish Singla, Marco Santoni, G{\'a}bor Bart{\'o}k, Pratik Mukerji, Moritz
  Meenen, and Andreas Krause.
\newblock Incentivizing users for balancing bike sharing systems.
\newblock In {\em AAAI}, 2015.

\bibitem[\protect\citeauthoryear{Wang \bgroup \em et al.\egroup
  }{2018}]{wang2018on-adv}
Xinshang Wang, Van-Anh Truong, and David Bank.
\newblock Online advance admission scheduling for services with customer
  preferences.
\newblock {\em arXiv preprint arXiv:1805.10412}, 2018.

\bibitem[\protect\citeauthoryear{Yi \bgroup \em et al.\egroup
  }{2007}]{yi2007matching}
Xing Yi, James Allan, and W~Bruce Croft.
\newblock Matching resumes and jobs based on relevance models.
\newblock In {\em SIGIR}, 2007.

\bibitem[\protect\citeauthoryear{Zhao \bgroup \em et al.\egroup
  }{2019}]{aaai-19-stable}
Boming Zhao, Pan Xu, Yexuan Shi, Yongxin Tong, Zimu Zhou, and Yuxiang Zeng.
\newblock Preference-aware task assignment in on-demand taxi dispatching: An
  online stable matching approach.
\newblock In {\em AAAI}, 2019.

\end{thebibliography}
}

\onecolumn
\section{Appendix}

\subsection{Proof of Lemma~\ref{lem:ratio-3}}

\begin{proof}
Consider a given $e \in E^1$. For each $e'\in E^1$  including $e'=e$, let $Z_{e'}=\rd(\eta x_{e'}^*)$ which takes two possible values: $z_{e'}^{-}=\lfloor\eta x_{e'}^*  \rfloor$ and $z_{e'}^{+}=\lceil\eta x_{e'}^*  \rceil$. Let $\SF_{e,k}$ be the event that $e$ is safe with respect to the resource $k$. From Lemma~\ref{lem:ratio-1}, we have for any integral resource $k \in \cS_e \cap \cK_1$,
$$ \Pr[ \SF_{e,k}| Z_e=z] \ge (1-\eta), \forall z \in \{z_e^{-}, z_e^{+}\}$$ 

Now we analyze the event of $\SF_{e,k}$ for a non-integral resource $k$. Focus on a given non-integral resource $k \in \cS_e \cap \cK_2$ with $a_{e,k}>0$.  Let $\kappa_1=\sum_{e'\neq e} x_{e'}^* a_{e',k}$ and $\kappa_2=x_e^* a_{e,k}$. Note that $\kap_1+\kap_2 \le B_k$. 
\begin{align*}
& \Pr[ \neg \SF_{e,k}|Z_e=z_e^{+}] =  \Pr\Big[ \sum_{\footnotesize e'\neq e} Z_{e'} a_{e',k}+\lceil \eta x_e^* \rceil a_{e,k} >B_k\Big] \\
&\le  \Pr\Big[ \sum_{ e'\neq e} Z_{e'} a_{e',k}>B_k-(1+\eta x_e^* ) a_{e,k}\Big] \\
& \le  \frac{\E[\sum_{ e'\neq e} Z_{e'} a_{e',k}]}{B_k-1-\eta x_e^* a_{e,k} } =\frac{\eta \kappa_1}{B_k-1-\eta \kappa_2}
\end{align*}

Consider these two cases. Case (1): $\kap_2 \ge 2$. 
Notice that $\eta=1/(2\ell) \le 1/2$. Thus, $B_k-1-\eta \kappa_2=B_k-\kap_2+(1-\eta) \kap_2-1 \ge \kap_1$, which is followed by  $\Pr[ \neg \SF_{e,k}|Z_e=z_e^{+}] \le \eta$. Case (2): $\kap_2<2$. In this case, 
\begin{align*}
&  \Pr[ \neg \SF_{e,k}|Z_e=z_e^{+}]  \le \frac{\eta \kappa_1}{B_k-1-\eta \kappa_2} \le   \frac{\eta \kappa_1}{B_k-2} \le  \frac{\eta B_k}{B_k-2}
\\
& =  \eta \Big(1+\frac{2}{B_k-2} \Big) \le \eta \Big(1+\frac{2}{B-2} \Big)  \doteq \eta (1+\ep_1)
\end{align*}
We can verify that $\Pr[ \neg \SF_{e,k}|Z_e=z_e^{-}] \le  \Pr[ \neg \SF_{e,t}|Z_e=z_e^{+}]$ for any $k$. Thus, we claim that $\Pr[\SF_{e,k}|Z_e=z] \ge (1- \eta(1+\ep_1))$, for any choice of $z$ and any $k \in \cS_e$. By applying the FKG inequality, we have for any $z \in \{z_e^{-},z_e^{+}\}$,
$$\Pr[\SF_e |Z_e=z] \doteq \Pr[ \wedge_{k \in \cS_e}\SF_{e,k}|Z_e=z] \ge (1- \eta(1+\ep_1))^\ell$$
Thus, $\E[X_e] \ge \E[Z_e ]  \cdot (1-\eta (1+\ep_1))^\ell =x_e^* \cdot \eta \cdot (1-\eta (1+\ep_1))^\ell \ge x_e^* (1-\ep_1)/(4\ell)$. (Note that $\eta=1/(2\ell)$.) 
\end{proof}

\subsection{Proof of Lemma~\ref{lem:ratio-4}}\label{sec:app}

\begin{proof}

Consider a given $t \in [T]$, $e=(i,j)\in E^2$ and $k \in \cS_e$. Let $\tau_1=\sum_{e' \in E^1} x_{e'}^* a_{e',k}$ and $\tau_2=\sum_{e' \in E^2, t'<t} y_{e',t'}^* a_{e',k}$. From  Constraint~\eqref{cons:res_1} in \LP~\eqref{LP:offline-a}, $\tau_1+\tau_2 \le B_k$. 

Assume $j$ arrives at time $t$, which occurs with probability $p_{jt}$. Let $U_1$ and $U_{2,t}$ be the respective (random) consumptions of resource $k$ during Phase I and at the beginning of time $t$ of  Phase II. Notice that the event occurs that $e$ is safe at $t$ with respect to resource $k$, denoted by $\SF_{e,t,k}$, iff $U_1+U_{2,t} \le B_k-1$ (since $a_{e,k}=1$). For each given integral resource $k \in \cS_e \cap \cK_1$, $\Pr[\neg \SF_{e,t,k}] \le \alp$ from Lemma~\ref{lem:ratio-2} (note that $\alp=\eta=1/(2\ell)$). Now focus on a given non-integral resource $k \in \cS_e \cap \cK_2$. Observe that 
$$\Pr[\neg \SF_{e,t,k}]  \le \Pr[U_1+U_{2,t} > B_k-1]$$


For each $e \in E^1$, let $Z_e=\rd(\eta x_e^*)$. Assume $\eta x_e^*=z_e^{-}+\kap_e$, where $z_e^{-}=\lfloor \eta x_e^* \rfloor$ and WLOG assume $0<\kap_e<1$. Then $Z_e $ can be viewed as a sum of $z_e^{-}$ independent Bernoulli random variables each of which has mean $1$ plus one additional Bernoulli random variable with mean $\kap_e$. Note that $U_1 \le \sum_{e' \in E^1} a_{e',k} Z_{e'}$ with probability $1$ according to $\alg$. 

 For each $t'<t$ and $j \in J$, let $\bo_{j,t'}$ indicate if $j$ arrives at $t'$. For each $e' \in E^2_j$, let $\bo_{e',t'}$ indicate if $e'$ is sampled when $j$ comes at $t'$. From the analysis of Lemma~\ref{lem:ratio-2}, $U_{2,t} \le \sum_{t'<t}\sum_{j \in J} \sum_{e' \in E_j^2} \bo_{j,t'} \cdot \bo_{e',t'} \cdot a_{e',k}$. Therefore,
 
 \begin{align}
\Pr[\neg \SF_{e,t,k}]  & \le \Pr[U_1+U_{2,t} > B_k-1]  \le \Pr\Big[\sum_{e' \in E^1} a_{e',k} Z_{e'} +\sum_{t'<t}\sum_{j \in J} \sum_{e' \in E_j^2} \bo_{j,t'} \cdot \bo_{e',t'} \cdot a_{e',k} >B_k-1 \Big] \\
& \doteq \Pr[H_{e,t} >B_k-1]  \le \exp \left( \frac{-\alp B_k \Big(\frac{B_k-1}{\alp B_k}-1 \Big)^2}{\frac{B_k-1}{\alp B_k}+1 } \right) \le \exp\Big(-\frac{B}{12}+\frac{1}{6} \Big) \label{ineq:lem-4}
 \end{align}
 The first inequality in~\eqref{ineq:lem-4} is due to the Chernoff bound where $H_{e,t}$ can be viewed a sum of independent or negatively correlated Bernoulli random variables and 
   $\E[H_{e,t}] \le \alp \tau_1+\eta \tau_2 \le \alp B_k$ from the analysis in Lemma~\ref{lem:ratio-2}. The second inequality is due to the facts that $B_k \ge B$ for all $k \in \cK_2$ and $\alp=1/(2 \ell) \le 1/2$. 

Summarizing our analysis over both integral and non-integral resources, we see
$$\Pr[\SF_{e,t}] =\Pr[\wedge_{k \in \cS_e} \SF_{e,k,t}]\ge 1-\ell_1 \alp-\ell_2  \exp\Big(-\frac{B}{12}+\frac{1}{6} \Big) \ge \frac{1}{2} (1-\ep_2)$$
where $\ep_2 \doteq 2\ell_2  \exp\Big(-\frac{B}{12}+\frac{1}{6} \Big)$. Therefore,
\begin{align*}
\E[Y_{e,t}]&=\Pr[(\bo_{j,t}=1) \wedge (\bo_{e,t}=1) \wedge (\wedge_{k \in \cS_e}\SF_{e,t,k})]  \ge p_{jt} \cdot (\alp y_{e,t}^*/p_{jt})\cdot  \frac{1}{2} (1-\ep_2) =y_{e,t}^* \frac{1-\ep_2}{4 \ell}
 \end{align*}
Thus, we get our claim.
\end{proof}

\end{document}